\def\eqref#1{equation~\ref{#1}}
\def\1{\bm{1}}
\DeclareMathAlphabet{\mathsfit}{\encodingdefault}{\sfdefault}{m}{sl}
\SetMathAlphabet{\mathsfit}{bold}{\encodingdefault}{\sfdefault}{bx}{n}
\theoremstyle{plain}
\newtheorem{theorem}{Theorem}
\newtheorem{theorem-appx}{Theorem}
\newtheorem{lemma}{Lemma}
\newtheorem{corollary}{Corollary}
\theoremstyle{definition}
\newtheorem{definition}{Definition}
\newtheorem{assumption}{Assumption}
\theoremstyle{remark}
\newcommand{\red}[1]{{\color{red} #1}}
\newcommand{\ignore}[1]{}
\title{Belief-Enriched Pessimistic Q-Learning against Adversarial State Perturbations}
\author{%
  Xiaolin Sun\\
  Department of Computer Science\\
  Tulane University\\
  New Orleans, LA 70118 \\
  \texttt{xsun12@tulane.edu} \\
  \And
  Zizhan Zheng\\
  Department of Computer Science\\
  Tulane University\\
  New Orleans, LA 70118 \\
  \texttt{zzheng3@tulane.edu} \\
}
\begin{document}
\maketitle
\begin{abstract}
Reinforcement learning (RL) has achieved phenomenal success in various domains. However, its data-driven nature also introduces new vulnerabilities that can be exploited by malicious opponents. Recent work shows that a well-trained RL agent can be easily manipulated by strategically perturbing its state observations at the test stage. Existing solutions either introduce a regularization term to improve the smoothness of the trained policy against perturbations or alternatively train the agent's policy and the attacker's policy. However, the former does not provide sufficient protection against strong attacks, while the latter is computationally prohibitive for large environments. In this work, we propose a new robust RL algorithm for deriving a pessimistic policy to safeguard against an agent's uncertainty about true states. This approach is further enhanced with belief state inference and diffusion-based state purification to reduce uncertainty. Empirical results show that our approach obtains superb performance under strong attacks and has a comparable training overhead with regularization-based methods. Our code is available at https://github.com/SliencerX/Belief-enriched-robust-Q-learning.
\end{abstract}
\section{Introduction}
As one of the major paradigms for data-driven control, reinforcement learning (RL) provides a principled and solid framework for sequential decision-making under uncertainty. By incorporating the approximation capacity of deep neural networks, deep reinforcement learning (DRL) has found impressive applications in robotics~\citep{levine2016end}, large generative models~\citep{openai2023gpt4}, and autonomous driving~\citep{kiran2021deep}, and obtained super-human performance in tasks such as Go~\citep{silver2016mastering} and Gran Turismo~\citep{wurman2022outracing}. 

\ignore{However, the data-driven nature of RL also comes with vulnerabilities when deployed in the real world.}
However, an RL agent is subject to various types of attacks, including state and reward perturbation, action space manipulation, and model inference and poisoning~\citep{advsurvey}. Recent studies have shown that an RL agent can be manipulated by poisoning its observation~\citep{Abbeel2017perturbation, SAMDP} and reward signals~\citep{huang2019deceptive}, and a well-trained RL agent can be easily defeated by a malicious opponent behaving unexpectedly~\citep{Gleave2020Adversarial}. In particular, recent research has demonstrated the brittleness~\citep{SAMDP,sun2021strongest} of existing RL algorithms in the face of adversarial state perturbations, where a malicious agent strategically and stealthily perturbs the observations of a trained RL agent, causing a significant loss of cumulative reward. Such an attack can be implemented in practice by exploiting the defects in the agent's perception component, e.g., sensors and communication channels. This raises significant concerns when applying RL techniques in security and safety-critical domains.

Several solutions have been proposed to combat state perturbation attacks. Among them, SA-MDP~\citep{SAMDP} imposes a regularization term in the training objective to improve the smoothness of the learned policy under state perturbations. This approach is improved in WocaR-RL~\citep{liang2022efficient} by incorporating an estimate of the worst-case reward under attacks into the training objective. In a different direction, ATLA~\citep{zhang2021robust} alternately trains the agent's policy and the attacker's perturbation policy, utilizing the fact that under a fixed agent policy, the attacker's problem of finding the optimal perturbations can be viewed as a Markov decision process (MDP) and solved by RL. This approach can potentially lead to a more robust policy but incurs high computational overhead, especially for large environments such as Atari games with raw pixel observations. 

Despite their promising performance in certain RL environments, the above solutions have two major limitations. First, actions are directly derived from a value or policy network trained using true states, despite the fact that the agent can only observe perturbed states at the test stage. 
This mismatch between the training and testing leads to unstable performance at the test stage. Second, most existing work does not exploit historical observations and the agent's knowledge about the underlying MDP model to characterize and reduce uncertainty and infer true states in a systematic way. 


In this work, we propose a pessimistic DQN algorithm against state perturbations by viewing the defender's problem as finding an approximate Stackelberg equilibrium for a two-player Markov game with asymmetric observations. Given a perturbed state, the agent selects an action that maximizes the worst-case value across possible true states. This approach is applied at both training and test stages, thus removing the inconsistency between the two. 
We further propose two approaches to reduce the agent's uncertainty about true states. First, the agent maintains a belief about the actual state using historical data, which, together with the pessimistic approach, provides a strong defense against large perturbations that may change the semantics of states. Second, for games with raw pixel input, such as Atari games, we train a diffusion model using the agent's knowledge about valid states, which is then used to purify observed states. This approach provides superb performance under commonly used attacks, with the additional advantage of being agnostic to the perturbation level. Our method achieves high robustness and significantly outperforms existing solutions under strong attacks while maintaining comparable performance under relatively weak attacks. Further, its training complexity is comparable to SA-MDP and WocaR-RL and is much lower than alternating training-based approaches. 
\ignore{
\section{Related Work}
Modern reinforcement learning has achieved tremendous success in many fields.  
However, an RL agent is subject to various types of attacks, including state and reward perturbation, action space manipulation, and model inference and poisoning~\citep{advsurvey}. How to ensure the robustness of reinforcement learning is still an open problem nowadays. Below we summarize key research on state perturbation attacks and defenses and defer the discussion of other related work to Appendix~\ref{related-appendix}. 


State perturbation attacks against RL policies are first introduced in~\cite{Abbeel2017perturbation}, where the MinBest attack that minimizes the probability of choosing the best action is proposed.~\cite{SAMDP} shows that when the agent's policy is fixed, the problem of finding the optimal adversarial policy is also an MDP, which can be solved using RL. 
This approach is further improved in~\citep{sun2021strongest}, where a more efficient algorithm for finding the optimal attack called PA-AD is developed.

On the defense side, \ignore{Zhang et al.}~\citet{SAMDP} prove that a policy that is optimal for any initial state under optimal state perturbation might not exist and propose a set of regularization-based algorithms (SA-DQN, SA-PPO, SA-DDPG) to train a robust agent against state perturbations. This approach is improved in~\citep{liang2022efficient} by training an additional worst-case Q-network and introducing state importance weights into regularization. 
In a different direction, an alternating training framework called ATLA is studied in~\citep{zhang2021robust} that trains the RL attacker and RL agent alternatively in order to increase the robustness of the DRL model. However, this approach suffers from high computational overhead. \cite{Xiongdetect} proposes an auto-encoder-based detection and denoising framework to detect perturbed states and restore true states. Also, \cite{Han22Whatis} shows that when the initial distribution is known, a policy that optimizes the expected return across initial states under state perturbations exists. }

\section{Background}
\subsection{Reinforcement Learning}
A reinforcement learning environment is usually 
formulated as a Markov Decision Process (MDP), denoted by a tuple $\langle S,A,P,R,\gamma \rangle$, where $S$ is the state space and $A$ is the action space. $P: S\times A \rightarrow \Delta(S)$ is the transition function of the MDP, where $P(s'|s,a)$ gives the probability of moving to state $s'$ given the current state $s$ and action $a$. 
$R: S\times A \rightarrow \mathbb{R}$ is the reward function where $R(s,a) = \mathbb{E}(R_t|s_{t-1}=s,a_{t-1}=a)$ and $R_t$ is the reward in time step $t$. Finally, $\gamma$ is the discount factor. 
An RL agent wants to maximize its cumulative reward 
$G = \Sigma^{T}_{t=0}\gamma^{t}R_t$ over a time horizon $T \in \mathbb{Z}^+ \cup 
\{\infty\}$, by finding a (stationary) policy $\pi: S \rightarrow \Delta(A)$, which can be either deterministic or stochastic. For any policy $\pi$, the state-value and action-value functions are two standard ways to measure how good $\pi$ is. The state-value function satisfies the Bellman equation $V_{\pi}(s) = \Sigma_{a\in A}\pi(a|s)[R(s,a)+\gamma \Sigma_{s'\in S}P(s'|s,a)V_{\pi}(s')]$ and the action-value function satisfies $Q_{\pi}(s,a) = R(s,a)+\gamma\Sigma_{s'\in S}P(s'|s,a)[\Sigma_{a'\in A}\pi(a'|s') Q_{\pi}(s',a')]$. For MDPs with a finite or countably infinite state space and a finite action space, there is a deterministic and stationary policy that is simultaneously optimal for all initial states $s$. For large and continuous state and action spaces,  deep reinforcement learning (DRL) incorporates the powerful approximation capacity of deep learning into RL and has found notable applications in various domains. 


\subsection{State Adversarial Attacks in RL}\label{state_adv}

\begin{wrapfigure}[9]{r}{0.5\textwidth}
\centering
\vspace{-5ex}
\begin{subfigure}{0.12\textwidth}
    \includegraphics[width=\textwidth]{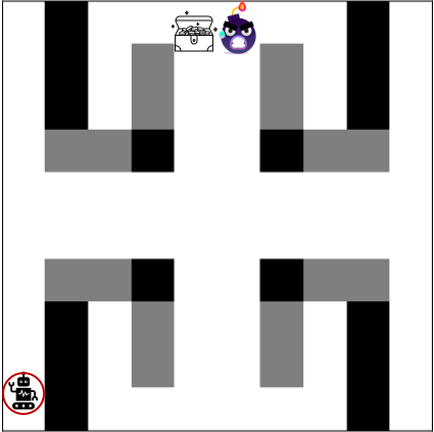}
    \caption{Original}
    \label{fig:third}
\end{subfigure}
\hfill
\begin{subfigure}{0.12\textwidth}
    \includegraphics[width=\textwidth]{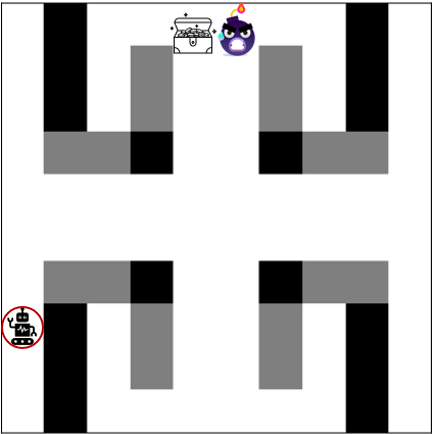}
    \caption{Perturbed}
    \label{fig:forth}
\end{subfigure}
\hfill
\begin{subfigure}{0.12\textwidth}
    \includegraphics[width=\textwidth]{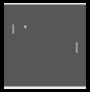}
    \caption{Original}
    \label{fig:fifth}
\end{subfigure}
\hfill
\begin{subfigure}{0.12\textwidth}
    \includegraphics[width=\textwidth]{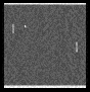}
    \caption{Perturbed}
    \label{fig:sixth}
\end{subfigure}      
\caption{\small{Examples of perturbed states \ignore{in different environments}: (a) and (b) show states in a continuous state  Gridworld, and (c) and (d) show states in the Atari Pong game. 
}}
\label{fig:ori_diffpics}
\end{wrapfigure}

First introduced in~\cite{Abbeel2017perturbation}, a state perturbation attack is a test stage attack targeting an agent with a well-trained policy $\pi$. At each time step, the attacker observes the true state $s_t$ and generates a perturbed state $\tilde{s}_t$ (see Figure~\ref{fig:ori_diffpics} for examples). The agent observes $\tilde{s}_t$ but not $s_t$ and takes an action $a_t$ according to $\pi(\cdot|\tilde{s}_t)$. The attacker's goal is to minimize the cumulative reward that the agent obtains.\ignore{It is important to} Note that the attacker only interferes with the agent's observed state but not the underlying MDP. Thus, the true state in the next time step is distributed according to $P(s_{t+1}|s_t,\pi(\cdot|\tilde{s}_t))$. To limit the attacker's capability and avoid being detected, we assume that $\tilde{s}_t \in B_\epsilon(s_t)$ where $B_\epsilon(s_t)$ is the $l_p$ ball centered at $s_t$ for some norm $p$. We consider a strong adversary that has access to both the MDP and the agent's policy $\pi$ and can perturb at every time step. With these assumptions, it is easy to see that the attacker's problem given a fixed $\pi$ can also be formulated as an MDP $\langle S,S,\tilde{P},\tilde{R},\gamma \rangle$, where both the state and action spaces are $S$, the transition probability $\tilde{P}(s'|s,\tilde{s}) = \sum_a \pi(a|\tilde{s})P(s'|s,a)$, and reward $\tilde{R}(s,\tilde{s}) = -\sum_a \pi(a|\tilde{s})R(s,a)$. Thus, an RL algorithm can be used to find a (nearly) optimal attack policy. 
Further, we adopt the common assumption~\citep{SAMDP,liang2022efficient} that the agent has access to an intact MDP at the training stage and has access to $\epsilon$ (or an estimation of it). As we discuss below, our diffusion-based approach is agnostic to $\epsilon$. Detailed discussions of related work on attacks and defenses in RL, including and beyond state perturbation, are in Appendix~\ref{related-appendix}.
\section{Pessimistic Q-learning with State Inference and Purification}

\begin{wrapfigure}[13]{r}{0.4\textwidth}
    \centering
    \vspace{-4ex}
    \includegraphics[width=0.4\textwidth]{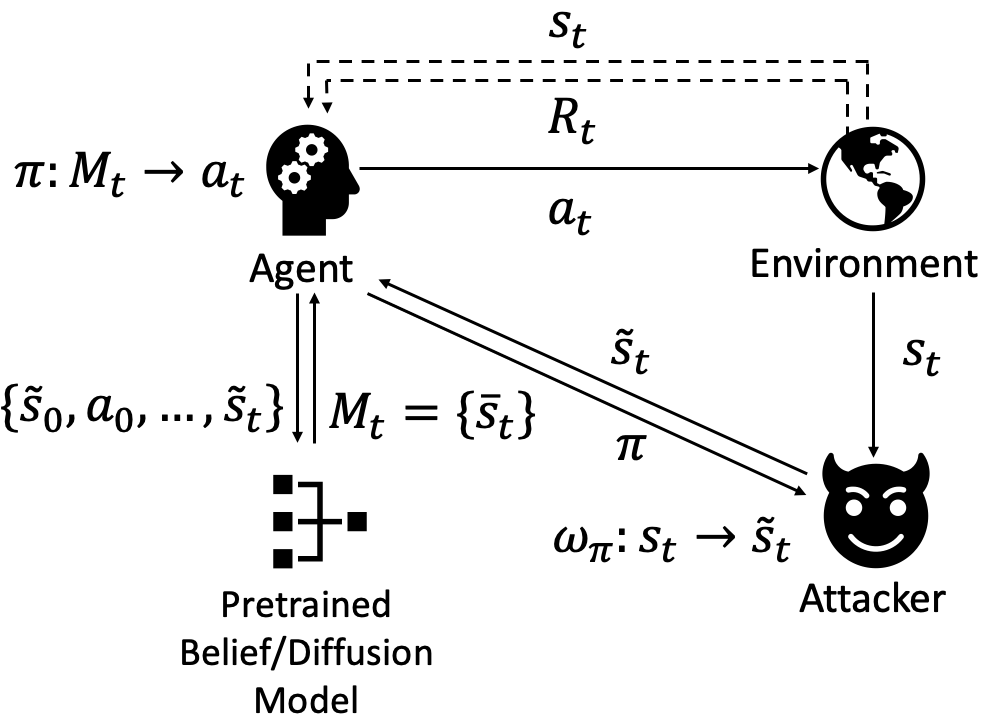}
    \caption{{Belief-enriched robust RL against state perturbations. Note that the agent can only access the true state $s_t$ and reward $R_t$ at the training stage.}}
    \label{fig:framework}
\end{wrapfigure}

In this section, we first formulate the robust RL problem as a two-player Stackelberg Markov game. We then present our pessimistic Q-learning algorithm that derives $\mathrm{maximin}$ actions from the Q-function using perturbed states as the input to safeguard against the agent's uncertainty about true states. We further incorporate a belief state approximation scheme and a diffusion-based state purification scheme into the algorithm to reduce uncertainty. Our extensions of the vanilla DQN algorithm that incorporates all three mechanisms are given in Algorithms~\ref{valid_state_training}-\ref{invalid_state_testing} in Appendix~\ref{algorithm-appendix}. We further give a theoretical result that characterizes the performance loss of being pessimistic. 

\subsection{State-Adversarial MDP as a Stackelberg Markov Game with Asymmetric Observations}\label{main_impossible}
The problem of robust RL under adversarial state perturbations can be viewed as a two-player Markov game, which motivates our pessimistic Q-learning algorithm given in the next subsection. The two players are the RL agent and the attacker with their state and action spaces and reward functions described in Section~\ref{state_adv}. 
The RL agent wants to find a policy $\pi: S  \rightarrow \Delta(A)$ that maximizes its long-term return, while the attacker wants to find an attack policy $\omega: S \rightarrow S$ to minimize the RL agent's cumulative reward. 
The game has asymmetric observations in that the attacker can observe the true states while the RL agent observes the perturbed states only. The agent's value functions for a given pair of policies $\pi$ and $\omega$ satisfy the Bellman equations below. 

\begin{definition} Bellman equations for state and action value functions under a state adversarial attack:\\ 
\ignore{
\begin{equation*}
    \begin{aligned}
    &V_{\pi\circ\omega}(s) = \Sigma_{a\in A}\pi(a|(\omega(s))\Sigma_{s'\in S}P(s'|s,a)[R(s,a) + \gamma V_{\pi\circ\omega}(s')],\\
    &Q_{\pi\circ\omega}(s,a)=\Sigma_{s' \in S}P(s'|s,a)[R(s,a) + \gamma \Sigma_{a'\in A} \pi (a|\omega(s')) Q_{\pi\circ\omega}(s',a')]. 
    \end{aligned}
\end{equation*}
}
\begin{equation*}
    \begin{aligned}
    &V_{\pi\circ\omega}(s) = \Sigma_{a\in A}\pi(a|(\omega(s))[R(s,a) + \gamma \Sigma_{s'\in S}P(s'|s,a)V_{\pi\circ\omega}(s')];\\
    &Q_{\pi\circ\omega}(s,a)=R(s,a) + \gamma \Sigma_{s' \in S}P(s'|s,a)[\Sigma_{a'\in A} \pi (a’|\omega(s')) Q_{\pi\circ\omega}(s',a')]. 
    \end{aligned}
\end{equation*}
\end{definition}

To achieve robustness, a common approach is to consider a Stackelberg equilibrium by viewing the RL agent as the leader and the attacker as the follower. The agent first commits to a policy $\pi$. The attack observes $\pi$ and identifies an optimal attack, denoted by $\omega_\pi$, as a response, where $\omega_{\pi}(s) = \mathrm{argmin}_{\tilde{s}\in B_{\epsilon(s)}}\Sigma_{a' \in A} \pi(a'|\tilde{s})Q(s,a')$. As the agent has access to the intact environment at the training stage and the attacker's budget $\epsilon$, it can, in principle, identify a robust policy proactively by simulating the attacker's behavior. Ideally, the agent wants to find a policy $\pi^*$ that reaches a Stackleberg equilibrium of the game, which is defined as follows.
\begin{definition} A policy $\pi^*$ is a Stackelberg equilibrium of a Markov game if
\begin{equation}
 \forall s\in S, \forall \pi, V_{\pi^*\circ\omega_{\pi^*}}(s)\ge V_{\pi\circ\omega_\pi}(s).    \nonumber  
\end{equation}
\end{definition}

A Stackelberg equilibrium ensures that the agent's policy $\pi^*$ is optimal (for any initial state) against the strongest possible {\it adaptive} attack and, therefore, provides a robustness guarantee. However, previous work has shown that 
due to the noisy observations, finding a stationary policy that is optimal for every initial state is generally impossible~\citep{SAMDP}. 
Existing solutions either introduce a regularization term to improve the smoothness of the policy or alternatively train the agent's policy and attacker's policy. In this paper, we take a different path with the goal of finding an approximate Stackelberg equilibrium {(the accurate definition is in Appendix~\ref{Approximate-appendix})}, which is further improved through state prediction and denoising. Figure~\ref{fig:framework} shows the high-level framework of our approach, which is discussed in detail below. 


\subsection{Strategy I - Pessimistic Q-learning Against the Worst Case}


Both value-based~\citep{kononen2004asymmetric} and policy-based~\citep{zheng2022stackelberg, vu2022stackelberg} approaches have been studied to identify the Stackelberg equilibrium (or an approximation of it) of a Markov game. In particular, Stackelberg Q-learning~\citep{kononen2004asymmetric} maintains separate Q-functions for the leader and the follower, which are updated by solving a stage game associated with the true state in each time step. However, these approaches do not apply to our problem as they all require both players to have access to the true state in each time step. In contrast, the RL agent can only observe the perturbed state. Thus, it needs to commit to a policy for all states centered around the observed state instead of a single action, as in the stage game of Stackelberg Q-learning. 

In this work, we present a pessimistic Q-learning algorithm (see Algorithm~\ref{Q-learning}) to address the above challenge. The algorithm maintains a Q-function with the true state as the input, similar to vanilla $Q$-learning. But instead of using a greedy approach to derive the target policy or a $\epsilon$-greedy approach to derive the behavior policy from the Q-function, a $\mathrm{maximin}$ approach is used in both cases. In particular, the target policy is defined as follows (line 5). Given a perturbed state $\tilde{s}$, the agent picks an action that maximizes the worst-case $Q$-value across all possible states in $B_\epsilon(\tilde{s})$, which represents the agent's uncertainty. We abuse the notation a bit and let $\pi(\cdot)$ denote a deterministic policy in the rest of the paper since we focus on Q-learning-based algorithms in this paper. The behavioral policy is defined similarly by adding exploration (lines 8 and 9). The attacker's policy $\omega_\pi$ is derived as the best response to the agent's policy (line 6), where a perturbed state is derived by minimizing the $Q$ value given the agent's policy. 

A few remarks follow. 
First, the $\mathrm{maximin}$ scheme is applied when choosing an action with exploration (line 9) and when updating the Q-function (line 11), and a perturbed state is used as the input in both cases. In contrast, in both SA-DQN~\citep{SAMDP} and WocaR-DQN~\citep{liang2022efficient}, actions are obtained from the Q-network using true states at the training stage, while the same network is used at the test stage to derive actions from perturbed states. Our approach removes this inconsistency, leading to better performance, especially under relatively large perturbations. Second, instead of the pessimistic approach, we may also consider maximizing the average case or the best case across $B_\epsilon(\tilde{s})$ when deriving actions, which provides a different tradeoff between robustness and efficiency. 
Third, we show how policies are derived from the Q-function to help explain the idea of the algorithm. Only the Q-function needs to be maintained when implementing the algorithm.

Figure \ref{fig:statesshowcase} in the Appendix \ref{graph-exmaples-appendix} illustrates the relations between a true state $s$, the perturbed state $\tilde{s}$, the worst-case state $\Bar{s} \in B_\epsilon(\tilde{s})$ for which the action is chosen (line 5). In particular, it shows that the true state $s$ must land in the $\epsilon$-ball centered at $\tilde{s}$, and the worst-case state the RL agent envisions is at most $2\epsilon$ away from the true state. This gap causes performance loss that will be studied in Section~\ref{Proof}. For environments with large state and action spaces, we apply the above idea to derive pessimistic DQN algorithms (see Algorithms~\ref{valid_state_training}-~\ref{invalid_state_testing} in Appendix~\ref{algorithm-appendix}), which further incorporate state inference and purification discussed below. Although we focus on value-based approaches in this work, the key ideas can also be incorporated into Stackelberg policy gradient~\citep{vu2022stackelberg} and Stackelberg actor-critic~\citep{zheng2022stackelberg} approaches, which is left to our future work.
\begin{algorithm2e}[!t]
    \SetAlgoLined
    \KwResult{Robust Q-function $Q$}
    Initialize $Q(s,a) = 0$ for all $s \in S$, $a \in A$\;
    
    \For{epsiode = 1,2,...}
    { 
    Initialize true state $s$ \\
    \Repeat{$s$ is terminal}
    {
    Update agent's policy: $\forall \tilde{s}\in S, \pi(\tilde{s}) = \mathrm{argmax}_{a \in A}\mathrm{min}_{\Bar{s}\in B_{\epsilon}(\tilde{s})}Q(\Bar{s},a)$\;
    Update attacker's policy: $\forall s\in S, \omega_{\pi}(s) = \mathrm{argmin}_{\tilde{s}\in B_{\epsilon}(s)}Q(s,\pi(\tilde{s}))$\;
    Generate perturbed state $\tilde{s} = \omega_\pi(s)$\;
    Choose $a$ from $\tilde{s}$ using $\pi$ with exploration:\\
    \ \ \ \ $a = \pi(\tilde{s})$ with probability $1-\epsilon'$; otherwise $a$ is a random action\;
    Take action $a$, observe reward $R$ and next true state $s'$\;
    Update Q-function: $Q(s,a)=Q(s,a)+\alpha\big[R(s,a) + \gamma Q(s',\pi(\omega_{\pi}(s')))-Q(s,a)\big]$\;
    $s = s'$;
    }
    }
    \caption{Pessimistic Q-Learning}
    \label{Q-learning}
    \vspace{-1ex}
\end{algorithm2e}
    

\subsection{Strategy II - Reducing Uncertainty Using Beliefs}
In Algorithm~\ref{Q-learning}, the agent's uncertainty against the true state is captured by the $\epsilon$-ball around the perturbed state. A similar idea is adopted in previous regularization-based approaches~\citep{SAMDP,liang2022efficient}. For example, SA-MDP~\citep{SAMDP} regulates the maximum difference between the top-1 action under the true state $s$ and that under the perturbed state across all possible perturbed states in $B_\epsilon(s)$. However, this approach is overly conservative and ignores the temporal correlation among consecutive states. Intuitively, the agent can utilize the sequence of historical observations and actions $\{(\tilde{s}_\tau,a_\tau)\}_{\tau < t} \cup \{\tilde{s}_t\}$ and the transition dynamics of the underlying MDP to reduce its uncertainty of the current true state $s_t$. This is similar to the belief state approach in partially observable MDPs (POMDPs). The key difference is that in a POMDP, the agent's observation $o_t$ in each time step $t$ is derived from a fixed observation function with $o_t= O(s_t,a_t)$. 
In contrast, the perturbed state $\tilde{s}_t$ is determined by the attacker's policy $\omega$, which is non-stationary at the training stage and is unknown to the agent at the test stage. 

To this end, we propose a simple approach to reduce the agent's worst-case uncertainty as follows. Let $M_t \subseteq B_\epsilon(\tilde{s}_t)$ denote the agent's belief about all possible true states at time step $t$. Initially, we let $M_0 = B_\epsilon(\tilde{s}_0)$. At the end of the time step $t$, we update the belief to include all possible next states 
that is reachable from the current state and action with a non-zero probability. 
Formally, let $M'_{t} = \{s' \in S: \exists s\in M_t, P(s'|s,a_t)>0\}$. After observing the perturbed state $\tilde{s}_{t+1}$, we then update the belief to be the intersection of $M'_t$ and $B_\epsilon(\tilde{s}_{t+1})$, i.e., $M_{t+1} = M'_{t} \cap B_\epsilon(\tilde{s}_{t+1})$, which gives the agent's belief at time $t+1$. Figure \ref{fig:statesshowcase} in the Appendix \ref{graph-exmaples-appendix} demonstrates this process, and the formal belief update algorithm is given in Algorithm~\ref{Belief Update} in Appendix~\ref{algorithm-appendix}.
Our pessimistic Q-learning algorithm can easily incorporate the agent's belief. In each time step $t$, instead of using $B_\epsilon(\tilde{s})$ in Algorithm~\ref{Q-learning} (line 5), the current belief $M_t$ can be used. 
It is an interesting open problem to develop a strong attacker that can exploit or even manipulate the agent's belief.

\noindent{\bf Belief approximation in large state space environments.}
When the state space is high-dimensional and continuous, computing the accurate belief as described above becomes infeasible as computing the intersection between high-dimensional spaces is particularly hard. 
Previous studies have proposed various techniques to approximate the agent's belief about true states using historical data in partially observable settings, including using classical RNN networks~\citep{ma2020particle} and flow-based recurrent belief state learning~\citep{chen2022flow}. In this work, we adapt the particle filter recurrent neural network (PF-RNN) technique developed in~\citep{ma2020particle} to our setting due to its simplicity. In contrast to a standard RNN-based belief model $B:(S\times A)^t \rightarrow H$ that maps the historical observations and actions to a deterministic latent state $h_t$, PF-RNN approximates the belief $b(h_t)$ by $\kappa_p$ weighted particles in parallel, which are updated using the particle filter algorithm according to the Bayes rule. An output function $f_{out}$ then maps the weighted average of these particles in the latent space to a prediction of the true state in the original state space. 

To apply PF-RNN to our problem, we first train the RNN-based belief model $N$ and the prediction function $f_{out}$ before learning a robust RL policy. This is achieved by using $C$ trajectories generated by a random agent policy and a random attack policy in an intact environment. Then at each time step $t$ during the RL training and testing, we use the belief model $N$ and historical observations and actions to generate $\kappa_p$ particles, map each of them to a state prediction using $f_{out}$, and take the set of $\kappa_p$ predicted states as the belief $M_t$ about the true state. PF-RNN includes two versions that support LSTM and GRU, respectively, and we use PF-LSTM to implement our approach. We define the complete belief model utilizing PF-RNN as $N_p \triangleq f_{out} \circ B$.


{We remark that previous work has also utilized historical data to improve robustness. For example, \cite{Xiongdetect} uses an LSTM-autoencoder to detect and denoise abnormal states at the test stage, and \cite{zhang2021robust} considers an LSTM-based policy in alternating training. However, none of them explicitly approximate the agent's belief about true states and use it to derive a robust policy.} 
\subsection{Strategy III - Purifying Invalid Observations via Diffusion}
\ignore{
State adversarial attack will have different effects based on different types of environments. For example, in a toy Grid-World example shown in Fig\ref{fig:first}, the state is usually the coordinate of the agent. However, in Atari game such as Pong shown in Fig\ref{fig:fifth}, the state is graph. State perturbations on these two different types of environments are different. To illustrate the difference between state perturbations attack, we first propose definitions of valid and invalid observations of an MDP.


\begin{definition}Valid and Invalid observations \label{valid states}\\
    Given an MDP $\langle S,A,P,R,\gamma \rangle$. Denote state space of this MDP as $\{S\}$. If a perturbed state $s' \in \{S\}$, then $s'$ is a valid observation. Otherwise, $s'$ is not a valid observation.
\end{definition}

From the point of view of valid and invalid observations, perturbed states defined by $l_p$ norm in Grid-World are mostly valid observations. For example, an agent with true coordinate $(0,0)$ receives a perturbed coordinate $(0,1)$. Since coordinate $(0,1)$ could be achieved by moving right from $(0,0)$, it is a valid observation. However, perturb states defined by $l_p$ norm in Atari games or other environments which have graph as states are mostly invalid. For example, in Atari Pong game, with $\epsilon = 1/255$, all possible perturbed states $s_a \in B_\epsilon(s)$ are invalid. Because the perturbed states only change the pixel-wise value of the graph, such changes could not be achieved through normal actions of the agent in the Pong game. 
}

For environments that use raw pixels as states, such as Atari Games, perturbed states generated by adding bounded noise to each pixel are mostly ``invalid'' in the following sense. Let $S_0 \subseteq S$ denote the set of possible initial states. 
Let $S^0$ denote the set of states that are reachable from any initial state in $S_0$ by following an arbitrary policy. Then perturbed states will fall outside of $S^0$ with high probability. This is especially the case for $l_\infty$ attacks that bound the perturbation applied to each pixel as commonly assumed in existing work (see Appendix \ref{sec:invalid-example-appendix} for an example). 
This observation points to a fundamental limitation of existing perturbation attacks that can be utilized by an RL agent to develop a more efficient defense. 

One way to exploit the above observation is to identify a set of ``valid'' states near a perturbed state and use that as the belief of the true state. However, it is often difficult to check if a state is valid or not and to find such a set due to the fact that raw pixel inputs are usually high-dimensional. 
Instead, we choose to utilize a diffusion model to purify the perturbed states, which obtains promising performance, as we show in our empirical results. 

To this end, we first sample $C'$ trajectories from a clean environment using a pre-trained policy without attack to estimate a state distribution $q(\cdot)$, which is then used to train a Denoising Diffusion Probabilistic Model (DDPM)~\citep{ho2020denoising}. Then during both RL training and testing, when the agent receives a perturbed state $\tilde{s}$, it applies the reverse process of the diffusion model for $k$ steps to generate a set of purified states as the belief $M_t$ of size $\kappa_d$, where $k$ and $\kappa_d$ are hyperparameters. We let $N_d: S \rightarrow S^{\kappa_d}$ denote a diffusion-based belief model. Note that rather than starting from random noise in the reverse process as in image generation, we start from a perturbed state that the agent receives and manually add a small amount of pixel-wise noise $\phi$ to it before denoising, 
inspired by denoised smoothing in deep learning~\citep{xiao2022densepure}. 
We observe in experiments that using a large $k$ does not hurt performance, although it increases the running time. Thus, unlike previous work, this approach is agnostic to the accurate knowledge of attack budget $\epsilon$. 
One problem with DDPM, however, is that it incurs high overhead to train the diffusion model and sample from it, making it less suitable for real-time decision-making. To this end, we further evaluate a recently developed fast diffusion technique, Progressive Distillation~\citep{salimans2022progressive}, which distills a multi-step sampler into a few-step sampler. As we show in the experiments, the two diffusion models provide different tradeoffs between robustness and running time. 
A more detailed description of the diffusion models 
and our adaptations are given in Appendix~\ref{DDPM-appendix}.

\ignore{In a DDPM model, the forward process constructs a discrete-time Markov chain as follows. Given an initial state $\mathbf{x}_0$ sampled from $q(\cdot)$, it gradually adds Gaussian noise to $\mathbf{x}_0$ to generate a sequence of noisy states $\mathbf{x}_1$,$\mathbf{x}_2$,...,$\mathbf{x}_K$ where 
$q\left(\mathbf{x}_i \mid \mathbf{x}_{i-1}\right)=\mathcal{N}\left(\mathbf{x}_i ; \sqrt{1-\beta_i} \mathbf{x}_{i-1}, \beta_i \mathbf{I}\right)$ so that $\mathbf{x}_K$ approximates the Gaussian white noise.
\ignore{
\begin{equation}
    \label{forward}
    q\left(\mathbf{x}_{1: T} \mid \mathbf{x}_0\right):=\prod_{t=1}^T q\left(\mathbf{x}_t \mid \mathbf{x}_{t-1}\right), \quad q\left(\mathbf{x}_t \mid \mathbf{x}_{t-1}\right):=\mathcal{N}\left(\mathbf{x}_t ; \sqrt{1-\beta_t} \mathbf{x}_{t-1}, \beta_t \mathbf{I}\right)
\end{equation}
}
Here $\beta_i$ is precalculated according to a variance schedule and $\mathbf{I}$ in the identity matrix. 
The reverse process is again a Markov chain that starts with $\mathbf{x}_K$ sampled from the Gaussian white noise $\mathcal{N}(0,\mathbf{I})$ and learns to remove the noise added in the forward process to regenerate $q(\cdot)$. This is achieved through the reverse transition $p_\theta\left(\mathbf{x}_{i-1} \mid \mathbf{x}_i\right):=\mathcal{N}\left(\mathbf{x}_{i-1}; \boldsymbol{\mu}_\theta\left(\mathbf{x}_i, i\right), \mathbf{\Sigma}_\theta\left(\mathbf{x}_i, i\right)\right)$ where $\theta$ denotes the network parameters used to approximate the mean and the variance added in the forward process. 
\ignore{
\ref{reverse}\citep{ho2020denoising}. 
\begin{equation}
    \label{reverse}
    p_\theta\left(\mathbf{x}_{0: T}\right):=p\left(\mathbf{x}_T\right) \prod_{t=1}^T p_\theta\left(\mathbf{x}_{t-1} \mid \mathbf{x}_t\right), \quad p_\theta\left(\mathbf{x}_{t-1} \mid \mathbf{x}_t\right):=\mathcal{N}\left(\mathbf{x}_{t-1} ; \boldsymbol{\mu}_\theta\left(\mathbf{x}_t, t\right), \mathbf{\Sigma}_\theta\left(\mathbf{x}_t, t\right)\right)
\end{equation}
}
As mentioned above, we modify the reverse process by starting from a perturbed state $\tilde{s}$ instead of $\mathbf{x}_K$ and take $k$ reverse steps with $k \ll K$, according to the observation that a perturbed state only introduces a small amount of noise to the true state due to the attack budget $\epsilon$. We observe in our experiments that using a large $n$ does not hurt the performance although it increases the running time (see Figure~\ref{fig:steps-diff} and Figure \ref{fig:fps}). }
\ignore{which is
\begin{equation}
    p_\theta\left(\mathbf{x}_{0: k}\right):=p\left(\mathbf{x}_k\right) \prod_{t=1}^k p_\theta\left(\mathbf{x}_{t-1} \mid \mathbf{x}_t\right), \quad p_\theta\left(\mathbf{x}_{t-1} \mid \mathbf{x}_t\right):=\mathcal{N}\left(\mathbf{x}_{t-1} ; \boldsymbol{\mu}_\theta\left(\mathbf{x}_t, t\right), \mathbf{\Sigma}_\theta\left(\mathbf{x}_t, t\right)\right)
\end{equation}
Where we use $x_k = s'$ as the starting point of the reverse process and we only take $k$ steps of the reverse process which will be significantly slower than $T$ that is used in the forward process since the perturbed states only introduce a 
small amount of noise based on the definition of attacker budget $\epsilon$.}
\ignore{
} 
\subsection{Pessimistic DQN with Approximate Beliefs and State Purification}
Built upon the above ideas, we develop two pessimistic versions of the classic DQN algorithm~\citep{mnih2013playing} by incorporating approximate belief update and diffusion-based purification, denoted by BP-DQN and DP-DQN, respectively. \ignore{\red{We also study a variant of DP-DQN by adopting the Progressive Distillation diffusion model instead of DDPM, which we call DP-DQN-F}.} The details are provided in Algorithms~\ref{valid_state_training}-~\ref{invalid_state_testing} in Appendix~\ref{algorithm-appendix}. 
Below we highlight the main differences between our algorithms and vanilla DQN. 

The biggest difference lies in the loss function, where we incorporate the $\mathrm{maximin}$ search into the loss function to target the worst case. Concretely, instead of setting $y_i = R_i + \gamma \mathrm{max}_{a'\in A}Q'(s_i, a')$ as in vanilla DQN, we set $y_i = R_i + \gamma \mathrm{max}_{a'\in A} \mathrm{min}_{m \in M_i}Q'(m, a')$ where $R_i, s_i, M_i$ are sampled from the replay buffer and $Q'$ is the target network. Similarly, instead of generating actions using the $\epsilon$-greedy (during training) or greedy approaches (during testing), the $\mathrm{maximin}$ search is adopted. 

To simulate the attacker's behavior, one needs to identify the perturbed state $\tilde{s}$ that minimizes the $Q$ value under the current policy $\pi$ subject to the perturbation constraint. As finding the optimal attack under a large state space is infeasible, we solve the attacker's problem using projected gradient descent (PGD) with $\eta$ iterations to find an approximate attack similar to the PGD attack~in \citep{SAMDP}. 
In BP-DQN where approximate beliefs are used, the history of states and actions is saved to generate the belief in each round. In DP-DQN where diffusion is used, the reverse process is applied to both perturbed and true states. That is, the algorithm keeps the purified version of the true states instead of the original states in the replay buffer during training. We find this approach helps reduce the gap between purified states and true states. In both cases, instead of training a robust policy from scratch, we find that it helps to start with a pre-trained model obtained from an attack-free MDP.

We want to highlight that BP-DQN is primarily designed for environments with structural input, whereas DP-DQN is better suited for environments with raw pixel input. Both approaches demonstrate exceptional performance in their respective scenarios, even when faced with strong attacks, as shown in our experiments. Thus, although combining the two methods by integrating history-based belief and diffusion techniques may seem intuitive, this is only needed when confronted with an even more formidable attacker, such as one that alters both semantic and pixel information in Atari games.



\subsection{Bounding Performance Loss due to Pessimism}\label{Proof}


In this section, we characterize the impact of being pessimistic in selecting actions. To obtain insights, we choose to work on a pessimistic version of the classic value iteration algorithm (see Algorithm~\ref{Q-iteration} in Appendix~\ref{algorithm-appendix}), which is easier to analyze than the Q-learning algorithm presented in Algorithm~\ref{Q-learning}. To this end, we first define the Bellman operator for a given pair of policies. 

\begin{definition} For a given pair of agent policy $\pi$ and attack policy $\omega$, the Bellman operator for the Q-function is defined as follows. \label{Bellman}
\begin{equation}
    T^{\pi \circ \omega}Q(s,a)= R(s,a) + \gamma \Sigma_{s' \in S}P(s'|s,a)Q(s',\pi(\omega(s')))
\end{equation}
\end{definition}

The algorithm maintains a Q-function, which is initialized to 0 for all state-action pairs. In each round $n$, the algorithm first derives the agent's policy $\pi_n$ and attacker's policy $\omega_{\pi_n}$ from the current Q-function $Q_n$ in the same way as in Algorithm~\ref{Q-learning}, using the worst-case belief, where 
\begin{equation*}
\begin{aligned}
 &\pi_n(\tilde{s}) = \mathrm{argmax}_{a \in A}\mathrm{min}_{\Bar{s}\in B_{\epsilon}(\tilde{s})}Q_n(\Bar{s},a), \forall \tilde{s} \in S. \\
  &\omega_{\pi_n}(s) = \mathrm{argmin}_{\tilde{s}\in B_{\epsilon}(s)}Q_n(s,\pi_n(\tilde{s}))), \forall s \in S.
\end{aligned}
\end{equation*}
That is, $\pi_n$ is obtained by solving a $\mathrm{maximin}$ problem using the current $Q_n$, and $\omega_{\pi_n}$ is a best response to $\pi_n$. The Q-function is then updated as $Q_{n+1} = T^{\pi_n \circ \omega_{\pi_n}}Q_n$. It is important to note that although $T^{\pi \circ \omega_{\pi}}$ is a contraction for a fixed $\pi$ (see Lemma~\ref{Operator Contraction--appendix} in Appendix~\ref{proof-appendix} for a proof), $T^{\pi_n \circ \omega_{\pi_n}}$ is typically not due its dependence on $Q_n$. Thus, $Q_n$ may not converge in general, which is consistent with the known fact that a state-adversarial MDP may not have a stationary policy that is optimal for every initial state. However, we show below that we can still bound the gap between the Q-value obtained by following the joint policy $\Tilde{\pi}_n := \pi_n \circ \omega_{\pi_n}$, denoted by $Q^{\Tilde{\pi}_n}$, and the optimal Q-value for the original MDP without attacks, denoted by $Q^*$. It is known that $Q^*$ is the unique fixed point of the Bellman optimal operator $T^*$, i.e., $T^*Q^* = Q^*$, where
\begin{equation}
    T^*Q(s,a)= R(s,a) + \gamma \Sigma_{s' \in S}P(s'|s,a)\mathrm{max}_{a' \in A} Q(s',a').
\end{equation}


\ignore{
\begin{definition} Bellman Operator of state perturbation Q-function. \label{Bellman}\\
For a given policy $\pi \circ \omega$,
\begin{equation}
    T^{\pi \circ \omega}Q = \Sigma_{\tilde{s} \in S}P(\Tilde{s}|s,a)[R(s,a) + \gamma Q(\Tilde{s},\pi(\omega(\Tilde{s})))]
\end{equation}
\end{definition}
\begin{definition} State perturbation Q-function approximation\\
\begin{equation}
    {{Q_{n+1}(s,a)=\Sigma_{\tilde{s} \in S}P(\Tilde{s}|s,a)[R(s,a) + \gamma Q_{n}(\Tilde{s},\pi(\omega(\Tilde{s}, Q_n), Q_n))]}}
\end{equation}
Where 
\begin{equation*}
\begin{aligned}
  &\omega_{\pi}(\Tilde{s}, Q_n) = \mathrm{argmin}_{s_a\in B_{\epsilon(\Tilde{s})}}Q_n(s,\pi(s')))\\ 
 &\pi(s',Q_n) = \mathrm{argmax}_{a \in A}\mathrm{min}_{\Bar{s}\in B_{\epsilon(s')}}Q_n(\Bar{s},a).
\end{aligned}
\end{equation*}
For easier notation, we denote $\pi\circ\omega$ to be $\Tilde{\pi}$.
Notice when $\Tilde{\pi}$ is a policy derived from $Q_n$, the approximation process of $Q_n$ is equal to  $T^{\Tilde{\pi}}Q_n = Q_{n+1}$.
\end{definition}
}
We first make the following assumptions about the reward and transition functions of an MDP and then state the main result after that.
\begin{assumption} The reward function and transition function are Lipschitz continuous. That is, there are constants $l_r$ and $l_p$ such that for $\forall s_1,s_2, s' \in S, \forall a \in A$, we have 
\begin{equation*}
        |R(s_1,a) - R(s_2,a)|\le l_r\|s_1-s_2\|, |P(s'|s_1,a)-P(s'|s_2,a))| \le l_p\|s_1-s_2\|.
\end{equation*}
\label{lipassumption}
\end{assumption}

\vspace{-3ex}
\begin{assumption} Reward $R$ is upper bounded where for any $s \in S$ and $a \in A$, $R(s,a) \leq R_{max}$.
\label{rmaxassumption}
\end{assumption}

\vspace{1ex}
\begin{theorem} The gap between $Q^{\Tilde{\pi}_n}$ and $Q^*$ is bounded by \label{performance bound}
\begin{equation*}
\mathrm{limsup}_{n\rightarrow\infty}\|Q^*-Q^{\Tilde{\pi}_n}\|_{\infty} \le \frac{1+\gamma}{(1-\gamma)^2}\Delta,
\end{equation*}
where $\Tilde{\pi}_n$ is obtained by Algorithm~\ref{Q-iteration} 
and $\Delta = 2\epsilon\gamma(l_r+l_p|S|\frac{R_{max}}{1-\gamma})$.
\end{theorem}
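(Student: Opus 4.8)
The plan is to bound the target gap in two stages: first control the error of a single pessimistic update relative to an exact optimal Bellman backup, and then propagate this per-step error to the limit via contraction arguments. Write $\tilde{\pi}_n = \pi_n \circ \omega_{\pi_n}$, so that $Q_{n+1} = T^{\tilde{\pi}_n} Q_n$ with $Q_0 \equiv 0$, and set $L = l_r + l_p|S|\frac{R_{max}}{1-\gamma}$, so that $\Delta = 2\epsilon\gamma L$. \textbf{Step 1 (the pessimistic action is almost greedy).} The crux is to show that for every state $s'$ the action $\tilde{\pi}_n(s') = \pi_n(\omega_{\pi_n}(s'))$ chosen from the perturbed state is nearly greedy at the \emph{true} state with respect to $Q_n$:
\[
0 \le \max_{a'} Q_n(s',a') - Q_n\big(s', \tilde{\pi}_n(s')\big) \le 2\epsilon L.
\]
I would prove this from the geometry noted earlier: with $\tilde{s}' = \omega_{\pi_n}(s')$, the true state satisfies $s' \in B_\epsilon(\tilde{s}')$, and every state entering the maximin lies in $B_\epsilon(\tilde{s}')$, hence within $2\epsilon$ of $s'$. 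Writing $a_n = \pi_n(\tilde{s}')$ and $a^\star = \argmax_{a'} Q_n(s',a')$, the maximin optimality of $a_n$ gives $\min_{\bar{s}\in B_\epsilon(\tilde{s}')} Q_n(\bar{s},a_n) \ge \min_{\bar{s}\in B_\epsilon(\tilde{s}')} Q_n(\bar{s},a^\star)$; since $s'\in B_\epsilon(\tilde{s}')$ the left-hand side is at most $Q_n(s',a_n)$, while the right-hand side is at least $Q_n(s',a^\star) - 2\epsilon L$ by Lipschitz continuity of $Q_n(\cdot,a^\star)$ over the $2\epsilon$-ball. Chaining the inequalities yields the claim.

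\textbf{Step 2 (uniform Lipschitz constant and per-step error).} Next I would verify that $Q_n(\cdot,a)$ is $L$-Lipschitz for every $n$, with a constant that does not grow with $n$. This goes by induction: in $Q_{n+1}(s,a) = R(s,a) + \gamma\sum_{s'}P(s'|s,a)Q_n(s',\tilde{\pi}_n(s'))$ the current state $s$ enters only through $R$ and $P$, so differencing in $s$ and applying Assumption~\ref{lipassumption} together with the uniform bound $\|Q_n\|_\infty \le R_{max}/(1-\gamma)$ (from Assumption~\ref{rmaxassumption} and $Q_0\equiv 0$) bounds the Lipschitz constant of $Q_{n+1}$ by $L$ \emph{independently} of the Lipschitz constant of $Q_n$. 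Plugging Step 1 into the backup then gives the key per-step estimate
\[
\big\|T^{\tilde{\pi}_n}Q_n - T^{*}Q_n\big\|_\infty \le \gamma\cdot 2\epsilon L = \Delta,
\]
because $T^{\tilde{\pi}_n}Q_n$ and $T^{*}Q_n$ differ only through the per-state action gap of Step 1, discounted by $\gamma$ and averaged over $P(\cdot|s,a)$.

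\textbf{Step 3 (propagation to the limit).} Writing $d_n = \|Q^{*}-Q_n\|_\infty$ and using that $T^{*}$ is a $\gamma$-contraction with fixed point $Q^{*}$, the per-step estimate gives $d_{n+1} \le \gamma d_n + \Delta$, hence $\limsup_n d_n \le \Delta/(1-\gamma)$. Finally, since $Q^{\tilde{\pi}_n}$ is the fixed point of the contraction $T^{\tilde{\pi}_n}$ (the contraction property established in the appendix), the triangle inequality $\|Q^{*}-Q^{\tilde{\pi}_n}\|_\infty \le d_{n+1} + \gamma\|Q_n - Q^{\tilde{\pi}_n}\|_\infty$ combined with $\|Q_n - Q^{\tilde{\pi}_n}\|_\infty \le d_n + \|Q^{*}-Q^{\tilde{\pi}_n}\|_\infty$ can be solved for $\|Q^{*}-Q^{\tilde{\pi}_n}\|_\infty$, giving $\|Q^{*}-Q^{\tilde{\pi}_n}\|_\infty \le (d_{n+1}+\gamma d_n)/(1-\gamma)$; taking $\limsup_n$ and substituting $\limsup_n d_n \le \Delta/(1-\gamma)$ yields the stated bound $\frac{1+\gamma}{(1-\gamma)^2}\Delta$.

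\textbf{Main obstacle.} Step 3 is routine contraction and $\limsup$ bookkeeping. The substance is Step 1 — arguing that a maximin action computed at the shifted, perturbed observation is still near-greedy at the unobserved true state, which is where the $2\epsilon$ gap and the Lipschitz constant enter — together with the observation in Step 2 that the iterates' Lipschitz constants stay uniformly bounded because next-state values do not depend on the current state. This uniform bound is precisely what prevents the approximation error from compounding across iterations.
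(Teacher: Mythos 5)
Your proposal is correct and follows essentially the same route as the paper: your Step 1 is the paper's bound on $\|T^*Q_n - Q_{n+1}\|_\infty$ via the maximin geometry and Lipschitz continuity, your Step 3 is the same contraction-plus-triangle-inequality bookkeeping (your $(d_{n+1}+\gamma d_n)/(1-\gamma)$ versus the paper's $(2\gamma d_n + \Delta)/(1-\gamma)$ give the same limit). Your Step 2 is a minor refinement worth keeping: the paper proves Lipschitz continuity for the fixed point $Q^{\tilde{\pi}}$ but then applies it to the iterates $Q_n$, whereas your induction establishes the uniform Lipschitz constant for the iterates directly, which is what the per-step estimate actually requires.
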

We give a proof sketch and leave the detailed proof in Appendix~\ref{proof-appendix}. We first show that $Q^{\Tilde{\pi}_n}$ is Lipschitz continuous using Assumption\ref{lipassumption}. Then we establish a bound of $\|T^*Q_n-Q_{n+1}\|_\infty$ and prove that $T^{\pi \circ \omega_{\pi}}$ for a fixed policy $\pi$ is a contraction. Finally, we prove Theorem~\ref{performance bound} following the idea of Proposition 6.1 in~\citep{neuro}. 


\ignore{
\begin{corollary} Approximation error bound \label{approximation error bound}
\begin{equation}
    ||TQ-Q_{n+1}||_\infty \le 2\epsilon\gamma(l_r+l_p|S|\frac{R_{max}}{1-\gamma})
\end{equation}
\end{corollary}

\begin{corollary} Given any policy $\Tilde{\pi} = \pi \circ \omega$, $T^{\Tilde{\pi}}$ is a contraction. \label{Operator Contraction}
\end{corollary}

With Corollary \ref{approximation error bound} and \ref{Operator Contraction} we can prove Theorem \ref{performance bound} and proof details are in the Appendix.}

\section{Experiments}
\vspace{-1ex}
In this section, we evaluate our belief-enriched pessimistic DQN algorithms by conducting experiments on three environments, a continuous state Gridworld environment (shown in Figure~\ref{fig:third}) for BP-DQN and two Atari games, Pong and Freeway for DP-DQN-O and DP-DQN-F, which utilize DDPM and Progressive Distillation as the diffusion model, respectively. (See Appendix \ref{sec:expr-rationale-appendix} for a justification.) We choose vanilla DQN~\citep{mnih2015human}, SA-DQN~\citep{SAMDP}, WocaR-DQN~\citep{liang2022efficient}, and Radial-DQN~\citep{oikarinen2021robust} as defense baselines. We consider three commonly used attacks to evaluate the robustness of these algorithms: (1) PGD attack~\citep{SAMDP}, which aims to find a perturbed state $\tilde{s}$ that minimizes $Q(s, \pi(\tilde{s}))$; (2) MinBest attack~\citep{Abbeel2017perturbation}, which aims to find a perturbed state $\tilde{s}$ that minimizes 
the probability of choosing the best action under $s$\ignore{, with the probabilities of actions represented by a softmax of Q-values}; and (3) PA-AD~\citep{sun2021strongest}, which utilizes RL to find a (nearly) optimal attack policy. Details on the environments and experiment setup can be found in Appendix~\ref{sec:expr-setup-appendix}. Additional experiment results and ablation studies are given in Appendix~\ref{sec:expr-result-appendix}.

\begin{table}[t]
\begin{subtable}{\linewidth}
\resizebox{\textwidth}{!}{
\begin{tabular}{|c|c|c|cc|cc|cc|}
\hline
                                                                                         &                                  &                                           & \multicolumn{2}{c|}{\textbf{PGD}}                                           & \multicolumn{2}{c|}{\textbf{MinBest}}                                        & \multicolumn{2}{c|}{\textbf{PA-AD}}                                               \\ \cline{4-9} 
\multirow{-2}{*}{\textbf{Environment}}                                                   & \multirow{-2}{*}{\textbf{Model}} & \multirow{-2}{*}{\textbf{Natural Reward}} & $\epsilon = 0.1$                     & $\epsilon = 0.5$                     & $\epsilon = 0.1$                     & $\epsilon = 0.5$                      & $\epsilon = 0.1$                       & $\epsilon = 0.5$                         \\ \hline
                                                                                         & DQN                              & $156.5 \pm 90.2$                          & $128 \pm 118$                        & $-53 \pm 86$                         & $98.2 \pm 137$                       & $98.2 \pm 137$                        & $-10.7 \pm 136$                        & $-35.9 \pm 118$                          \\
                                                                                         & SA-DQN                           & $20.8 \pm 140$                            & $46 \pm 142$                         & $-100 \pm 0$                         & $-5.8 \pm 131$                       & $-100 \pm 0$                          & $-97.5 \pm 13.6$                       & $-67.8 \pm 78.3$                         \\
                                                                                         & WocaR-DQN                        & $-63 \pm 88$                              & $-100 \pm 0$                         & $-63.2 \pm 88$                       & $-100 \pm 0$                         & $-63.2 \pm 88$                        & $-100 \pm 0$                           & $-63.2 \pm 88$                           \\
\multirow{-4}{*}{\textbf{\begin{tabular}[c]{@{}c@{}}GridWorld\\ Continous\end{tabular}}} & \cellcolor[HTML]{C0C0C0}BP-DQN (Ours)     & \cellcolor[HTML]{C0C0C0}$163 \pm 26$      & \cellcolor[HTML]{C0C0C0}$165 \pm 29$ & \cellcolor[HTML]{C0C0C0}$176 \pm 16$ & \cellcolor[HTML]{C0C0C0}$147 \pm 88$ & \cellcolor[HTML]{C0C0C0}$114 \pm 114$ & \cellcolor[HTML]{C0C0C0}$171.9 \pm 17$ & \cellcolor[HTML]{C0C0C0}$177.2 \pm 10.6$ \\ \hline
\end{tabular}
}
\caption{Continuous Gridworld Results}
\label{Gridworld_results}
\end{subtable}

\begin{subtable}{\linewidth}
\resizebox{\textwidth}{!}{
\begin{tabular}{|c|c|c|ccc|ccc|ccc|}
\hline
                                   &                                         &                                                                                      & \multicolumn{3}{c|}{\textbf{PGD}}                                                                                        & \multicolumn{3}{c|}{\textbf{MinBest}}                                                                                    & \multicolumn{3}{c|}{\textbf{PA-AD}}                                                                                      \\ \cline{4-12} 
\multirow{-2}{*}{\textbf{Env}}     & \multirow{-2}{*}{\textbf{Model}}        & \multirow{-2}{*}{\textbf{\begin{tabular}[c]{@{}c@{}}Natural \\ Reward\end{tabular}}} & $\epsilon = 1/255$                     & $\epsilon = 3/255$                     & $\epsilon = 15/255$                    & $\epsilon = 1/255$                     & $\epsilon = 3/255$                     & $\epsilon = 15/255$                    & $\epsilon = 1/255$                     & $\epsilon = 3/255$                     & $\epsilon = 15/255$                    \\ \hline
                                   & DQN                                     & $21 \pm 0$                                                                           & $-21 \pm 0$                            & $-21 \pm 0$                            & $-21 \pm 0$                            & $-21 \pm 0$                            & $-21 \pm 0$                            & $-21 \pm 0$                            & $-18.2 \pm 2.3$                        & $-19 \pm 2.2$                          & $-21 \pm 0$                            \\
                                   & SA-DQN                                  & $21 \pm 0$                                                                           & $21 \pm 0$                             & $21 \pm 0$                             & $-20.8 \pm 0.4$                        & $21 \pm 0$                             & $21 \pm 0$                             & $-21 \pm 0$                            & $21 \pm 0$                             & $18.7 \pm 2.6$                         & $-20 \pm 0$                            \\
                                   & WocaR-DQN                               & $21 \pm 0$                                                                           & $21 \pm 0$                             & $21 \pm 0$                             & $-21 \pm 0$                            & $21 \pm 0$                             & $21 \pm 0$                             & $-21 \pm 0$                            & $21 \pm 0$                             & $19.7 \pm 2.4$                         & $-21 \pm 0$                            \\
                                   & \cellcolor[HTML]{C0C0C0}DP-DQN-O (Ours) & \cellcolor[HTML]{C0C0C0}$19.9 \pm 0.3$                                               & \cellcolor[HTML]{C0C0C0}$19.9 \pm 0.3$ & \cellcolor[HTML]{C0C0C0}$19.8 \pm 0.4$ & \cellcolor[HTML]{C0C0C0}$19.7 \pm 0.5$ & \cellcolor[HTML]{C0C0C0}$19.9 \pm 0.3$ & \cellcolor[HTML]{C0C0C0}$19.9 \pm 0.3$ & \cellcolor[HTML]{C0C0C0}$19.3 \pm 0.8$ & \cellcolor[HTML]{C0C0C0}$19.9 \pm 0.3$ & \cellcolor[HTML]{C0C0C0}$19.9 \pm 0.3$ & \cellcolor[HTML]{C0C0C0}$19.3 \pm 0.8$ \\
\multirow{-5}{*}{\textbf{Pong}}    & \cellcolor[HTML]{C0C0C0}DP-DQN-F (Ours) & \cellcolor[HTML]{C0C0C0}$20.8 \pm 0.4$                                               & \cellcolor[HTML]{C0C0C0}$20.4 \pm 0.9$ & \cellcolor[HTML]{C0C0C0}$20.4 \pm 0.9$ & \cellcolor[HTML]{C0C0C0}$18.3\pm 1.9$  & \cellcolor[HTML]{C0C0C0}$20.6 \pm 0.9$ & \cellcolor[HTML]{C0C0C0}$20.4 \pm 0.8$ & \cellcolor[HTML]{C0C0C0}$21.0 \pm 0.0$ & \cellcolor[HTML]{C0C0C0}$18.6 \pm 2.5$ & \cellcolor[HTML]{C0C0C0}$20.0 \pm 1$   & \cellcolor[HTML]{C0C0C0}$18.2 \pm 1.8$ \\ \hline
                                   & DQN                                     & $34 \pm 0.1$                                                                         & $0 \pm 0$                              & $0 \pm 0$                              & $0 \pm 0$                              & $0 \pm 0$                              & $0 \pm 0$                              & $0 \pm 0$                              & $0 \pm 0$                              & $0 \pm 0$                              & $0 \pm 0$                              \\
                                   & SA-DQN                                  & $30 \pm 0$                                                                           & $30 \pm 0$                             & $30 \pm 0$                             & $0 \pm 0$                              & $27.2 \pm 3.4$                         & $18.3 \pm 3.0$                         & $0 \pm 0$                              & $20.1 \pm 4.0$                         & $9.5 \pm 3.8$                          & $0 \pm 0$                              \\
                                   & WocaR-DQN                               & $31.2 \pm 0.4$                                                                       & $31.2 \pm 0.5$                         & $31.4 \pm 0.3$                         & $21.6 \pm 1$                           & $29.6 \pm 2.5$                         & $19.8 \pm 3.8$                         & $21.6 \pm 1$                           & $24.9 \pm 3.7$                         & $12.3 \pm 3.2$                         & $21.6 \pm 1$                           \\
                                   & \cellcolor[HTML]{C0C0C0}DP-DQN-O (Ours) & \cellcolor[HTML]{C0C0C0}$28.8 \pm 1.1$                                               & \cellcolor[HTML]{C0C0C0}$29.1 \pm 1.1$ & \cellcolor[HTML]{C0C0C0}$29 \pm 0.9$   & \cellcolor[HTML]{C0C0C0}$28.9 \pm 0.7$ & \cellcolor[HTML]{C0C0C0}$29.2 \pm 1.0$ & \cellcolor[HTML]{C0C0C0}$28.5 \pm 1.2$ & \cellcolor[HTML]{C0C0C0}$28.6 \pm 1.3$ & \cellcolor[HTML]{C0C0C0}$28.6 \pm 1.2$ & \cellcolor[HTML]{C0C0C0}$28.3 \pm 1$   & \cellcolor[HTML]{C0C0C0}$28.8 \pm 1.3$ \\
\multirow{-5}{*}{\textbf{Freeway}} & \cellcolor[HTML]{C0C0C0}DP-DQN-F (Ours) & \cellcolor[HTML]{C0C0C0}$31.2 \pm 1$                                                 & \cellcolor[HTML]{C0C0C0}$30.0 \pm 0.9$ & \cellcolor[HTML]{C0C0C0}$30.1 \pm 1$   & \cellcolor[HTML]{C0C0C0}$30.7 \pm 1.2$ & \cellcolor[HTML]{C0C0C0}$30.2 \pm 1.3$ & \cellcolor[HTML]{C0C0C0}$30.6 \pm 1.4$ & \cellcolor[HTML]{C0C0C0}$29.4 \pm 1.2$ & \cellcolor[HTML]{C0C0C0}$30.8 \pm 1$   & \cellcolor[HTML]{C0C0C0}$31.4 \pm 0.8$ & \cellcolor[HTML]{C0C0C0}$28.9 \pm 1.1$ \\ \hline
\end{tabular}
}
\caption{Atari Games Results}
\label{Atari_results}
\end{subtable}
\caption{Experiment Results. We show the average episode rewards $\pm$ standard deviation over 10 episodes for our methods and three baselines. The results for our methods are highlighted in gray.}
\end{table}
\subsection{Results and Discussion}
\noindent{\bf Continuous Gridworld.} As shown in Table \ref{Gridworld_results}, our method (BP-DQN) achieves the best performance under all scenarios in the continuous state Gridworld environment and significantly surpasses all the baselines. In contrast, both SA-DQN and WocaR-DQN fail under the large attack budget $\epsilon = 0.5$ and perform poorly under the small attack budget $\epsilon = 0.1$. We conjecture that this is because state perturbations in the continuous Gridworld environment often change the semantics of states since most perturbed states are still valid observations. We also noticed that both SA-DQN and WocaR-DQN perform worse than vanilla DQN when there is no attack and when $\epsilon = 0.1$. We conjecture that this is due to the mismatch between true states and perturbed states during training and testing and the approximation used to estimate the upper and lower bounds of Q-network output using the Interval Bound Propagation (IBP) technique~\citep{gowal2019scalable}  in their implementations. 
Although WocaR-DQN performs better under $\epsilon = 0.5$ than $\epsilon = 0.1$, it fails in both cases to achieve the goal of the agent where the policies end up wandering in the environment or reaching the bomb instead of finding the gold. 

\noindent{\bf Atari Games.} As shown in Table~ \ref{Atari_results}, our DP-DQN method outperforms all other baselines under a strong attack (e.g., PA-AD) or a large attack budget (e.g., $\epsilon = 15/255$), 
while achieving comparable performance as other baselines in other cases. 
\ignore{\red{We also report a variant of DP-DQN called DP-DQN-F by implementing Progressive Distillation\citep{salimans2022progressive}, which accelerates the diffusion process and we will discuss this method in the ablation study section below.}} SA-DQN and WocaR-DQN fail to respond to large state perturbations for two reasons.  First, both of them use IBP to estimate an upper and lower bound of the neural network output under perturbations, which are likely to be loose under large perturbations. 
Second, both approaches utilize a regularization-based approach to maximize the chance of choosing the best action 
for all states in the $\epsilon$-ball centered at the true state. This approach is effective 
under small perturbations but can pick poor actions for large perturbations as the latter can easily exceed the generalization capability of the Q-network. We observe that WocaR-DQN performs better when the attack budget increases from $3/255$ to $15/255$ in Freeway, which is counter-intuitive. The reason is that 
under large perturbations, the agent adopts a bad policy by always moving forward regardless of state, which gives a reward of around 21. 
Further, SA-DQN and WocaR-DQN need to know the attack budget in advance in order to train their policies. A policy trained under attack budget $\epsilon = 1/255$ is ineffective against larger attack budgets. In contrast, our DP-DQN method is agnostic to the perturbation level. All results of DP-DQN shown in Table~\ref{Atari_results} are generated with the same policy trained under attack budget $\epsilon = 1/255$. 

We admit that our method suffers a small performance loss compared with SA-DQN and WocaR-DQN in the Atari games when there is no attack or when the attack budget is low. We conjecture that no single fixed policy is simultaneously optimal against different types of attacks. A promising direction is to adapt a pre-trained policy to the actual attack using samples collected online.

\begin{table}[t]
\centering
\resizebox{0.9\textwidth}{!}{
{
\begin{tabular}{|c|c|c|c|c|c|}
\hline
                                                                                         &                                      & \textbf{PGD}                         &                                        &                                      & \textbf{PA-AD}                         \\ \cline{3-3} \cline{6-6} 
\multirow{-2}{*}{\textbf{Environment}}                                                   & \multirow{-2}{*}{\textbf{Model}}     & $\epsilon = 0.5$                     & \multirow{-2}{*}{\textbf{Environment}} & \multirow{-2}{*}{\textbf{Model}}     & $\epsilon = 15/255$                    \\ \hline
                                                                                         & Maximin Only                          & $-71 \pm 91$                         &                                        & Maximin Only                          & $-21 \pm 0$                            \\
                                                                                         & Belief Only                          & $45.7 \pm 134$                       &                                        & DDPM Only                       & $18.8 \pm 1.6$                         \\
\multirow{-3}{*}{\textbf{\begin{tabular}[c]{@{}c@{}}Continuous\\ Gridworld\end{tabular}}} & \cellcolor[HTML]{FFFFFF}BP-DQN~(Ours) & \cellcolor[HTML]{FFFFFF}$176 \pm 16$ & \multirow{-3}{*}{\textbf{Pong}}        & \cellcolor[HTML]{FFFFFF}DP-DQN-O~(Ours) & \cellcolor[HTML]{FFFFFF}$19.3 \pm 0.8$ \\ \hline
\end{tabular}
}
}
\caption{Ablation Study Results. We compare our methods 
with variants that use $\mathrm{maximin}$ search or belief approximation only.}
\label{Abalation_results}
\vspace{-2ex}
\end{table}
\noindent{\bf Importance of Combining Maximin and Belief.} In Table~\ref{Abalation_results}, we compare our methods (BP-DQN and DP-DQN-O) that integrate the ideas of $\mathrm{maximin}$ search and belief approximation (using either RNN or diffusion) with variants of our methods that use $\mathrm{maximin}$ search or belief approximation only. The former is implemented using a trained BP-DQN or DP-DQN-O policy together with random samples from the $\epsilon$-ball centered at a perturbed state (the worst-case belief) during the test stage. The latter uses the vanilla DQN policy with a single belief state generated by either the PF-RNN or the DDPM diffusion model at the test stage. The results clearly demonstrate the importance of integrating both ideas to achieve more robust defenses. 

\ignore{\noindent{\bf{Diffusion Effects.}} In Figures \ref{fig:1-255-diff}-\ref{fig:15-255-diff}, we visualize the effect of diffusion by recording the $l_2$ distance between true states and perturbed states and that between purified true state and purified perturbed state. For all three levels of attack budgets, our diffusion model successfully shrinks the gap between true states and perturbed states.}

\ignore{Potential solutions include using a smaller diffusion model or applying state-of-the-art fast diffusion methods to reduce the overhead.}

\section{Conclusion and Limitations}
In conclusion, this work proposes two algorithms, BP-DQN and DP-DQN, to combat state perturbations against reinforcement learning. Our methods achieve high robustness and significantly outperform state-of-the-art baselines under strong attacks. Further, our DP-DQN method has revealed an important limitation of existing state adversarial attacks on RL agents with raw pixel input, pointing to a promising direction for future research. 

However, our work also has some limitations. 
First, our method needs access to a clean environment during training. Although the same assumption has been made in most previous work in this area, including SA-MDP and WocaR-MDP, a promising direction is to 
consider an offline setting to release the need to access a clean environment by learning directly from (possibly poisoned) trajectory data. 
Second, using a diffusion model increases the computational complexity of our method and causes slow running speed at the test stage. Fortunately, we have shown that fast diffusion methods can significantly speed up runtime performance.\ignore{faster diffusion methods such as Progressive Distillation\citep{salimans2022progressive} and Consistency Models\citep{song2023consistency} have been recently developed. By adopting Progressive Distillation, our DP-DQN-F algorithm can reach a comparable running speed compared to other baselines.} Third, we have focused on value-based methods in this work. Extending our approach to policy-based methods is an important next step. 

\ignore{
\noindent{\bf Performance vs. Running Time in DP-DQN.} We study the impact of different diffusion steps $k$ on average return of BP-DQN in Figure~\ref{fig:steps-diff} and their testing stage running time in Figure~\ref{fig:fps}. Figure \ref{fig:steps-diff} shows the performance under different diffusion steps of our method in the Freeway environment under PGD attack with budget $\epsilon = 15/255$. It shows that we need enough diffusion steps to gain good robustness, and more diffusion steps do not harm the return but do incur extra overhead, as shown in Figure \ref{fig:fps}, where we plot the testing stage running time in FPS. 
As the number of diffusion steps increases, the running time of our method also increases, as expected. Even when $k=1$, our method only achieves around 55 FPS during testing, while SA-DQN and WocaR-DQN can reach around 500 FPS. Potential solutions include using a smaller diffusion model or applying state-of-the-art fast diffusion methods to reduce the overhead.}

\section*{Acknowledgments}
This work has been funded in part by NSF grant CNS-2146548 and Tulane University Jurist Center for Artificial Intelligence. We thank the anonymous reviewers for their valuable and insightful feedback.

\newpage
\bibliography{ref}
\bibliographystyle{iclr2024_conference}


\newpage
\appendix

\section*{Appendix}

\section{Broader Impacts}

As RL is increasingly being used in vital real-world applications like autonomous driving and large generative models, we are rapidly moving towards an AI-assisted society. With AI becoming more widespread, it is important to ensure that the policies governing AI are robust. An unstable policy could be easily exploited by malicious individuals or organizations, causing damage to property, productivity, and even loss of life. Therefore, providing robustness is crucial to the successful deployment of RL and other deep learning algorithms in the real world. Our work provides new insights into enhancing the robustness of RL policies against adversarial attacks and contributes to the foundation of trustworthy AI. 

\section{Related Work}\label{related-appendix}
\subsection{State Perturbation Attacks and Defenses}
State perturbation attacks against RL policies are first introduced in~\cite{Abbeel2017perturbation}, where the MinBest attack that minimizes the probability of choosing the best action is proposed.~\cite{SAMDP} show that when the agent's policy is fixed, the problem of finding the optimal adversarial policy is also an MDP, which can be solved using RL. 
This approach is further improved in~\citep{sun2021strongest}, where a more efficient algorithm for finding the optimal attack called PA-AD is developed.

On the defense side, \ignore{Zhang et al.}~\citet{SAMDP} prove that a policy that is optimal for any initial state under optimal state perturbation might not exist and propose a set of regularization-based algorithms (SA-DQN, SA-PPO, SA-DDPG) to train a robust agent against state perturbations. This approach is improved in~\citep{liang2022efficient} by training an additional worst-case Q-network and introducing state importance weights into regularization. 
In a different direction, an alternating training framework called ATLA is studied in~\citep{zhang2021robust} that trains the RL attacker and RL agent alternatively in order to increase the robustness of the DRL model. However, this approach suffers from high computational overhead. \cite{Xiongdetect} propose an auto-encoder-based detection and denoising framework to detect perturbed states and restore true states. Also, \cite{Han22Whatis} shows that when the initial distribution is known, a policy that optimizes the expected return across initial states under state perturbations exists. 
\subsection{Attacks and Defenses Beyond State Perturbations}
This section briefly introduces other types of adversarial attacks in RL beyond state perturbation. 
As shown in~\citep{huang2019deceptive}, manipulating the reward signal can successfully affect the training convergence of Q-learning and mislead the trained agent to follow a policy that the attacker aims at. Furthermore, an adaptive reward poisoning method is proposed by \citep{zhang2020adaptive} to achieve a nefarious policy in steps polynomial in state-space size $|S|$ in the tabular setting.

\ignore{Huang and Zhu~\citet{huang2019deceptive} study how the manipulation of the reward signal can affect the training convergence of Q-learning and mislead the trained agent to follow a policy that the attacker aims at. \ignore{Zhang et al.}~\citet{zhang2020adaptive} propose an adaptive reward poisoning method that can achieve a nefarious policy in steps polynomial in state-space size $|S|$ in the tabular setting.}

\ignore{Lee et al.} \citet{lee2020spatiotemporally} propose two methods for perturbing the action space, where the $LAS$ (look-ahead action space) method achieves better attack performance in terms of decreasing the cumulative reward of DRL by distributing attacks across the action and temporal dimensions.  Another line of work investigates adversarial policies in a multi-agent environment, where it has been shown that an opponent adopting an adversarial policy could easily beat an agent with a well-trained policy in a zero-sum game~\citep{Gleave2020Adversarial}.
\ignore{Gleave et al.~\citet{Gleave2020Adversarial} show that in a zero-sum game, an opponent adopting an adversarial policy could easily beat an agent with a well-trained policy.} 


For attacking an RL agent's policy network, both inference attacks~\citep{chen2021stealing}, where the attacker aims to steal the policy network parameters, and poisoning attacks~\citep{huai2020malicious} that directly manipulate model parameters have been considered. In particular, an optimization-based technique for identifying an optimal strategy for poisoning the policy network is proposed in~\citep{huai2020malicious}.

\subsection{Backdoor Attacks in RL}
Recent work investigating defenses against backdoor attacks in RL also considers recovering true states to gain robustness~\citep{bharti2022provable}.
However, there are important differences between our work and \citep{bharti2022provable}. First, our work contains two important parts that \citep{bharti2022provable} does not have, which are the maximin formulation and belief update. The former allows us to obtain a robust policy by making fewer assumptions about attack behavior compared to \citep{bharti2022provable}. Note that this approach is unique to state-perturbation attacks, as it is difficult to define a worst-case scenario for backdoor attacks. The latter is crucial to combat adaptive perturbations that can change the semantic meaning of states, which can potentially be very useful to backdoor attacks as well. Second, our Lipschitz assumptions differ from those in \citep{bharti2022provable}. We assume that the reward and transition functions of the underlying MDP are Lipschitz continuous while \cite{bharti2022provable} assume that the backdoored policies are Lipschitz continuous.

\subsection{Partially Observable MDPs}
As first proposed by\ignore{Åström}~\citet{astrom1965optimal}, a Partially Observable MDP is a generalization of an MDP where the system dynamics are determined by an MDP, but the agent does not have full access to the state. The agent could only partially observe the underlying state that is usually determined by a fixed observation function $\mathcal{O}$. POMDPs could model a lot of real-life sequential decision-making problems such as robot navigation. However, since the agent does not have perfect information about the state, solutions for POMDPs usually need to infer a belief about the true state and find an action that is optimal for each possible belief. To this end, algorithms for finding a compressed belief space in order to solve large state space POMDPs have been proposed~\citep{roy2005finding}. State-of-the-art solutions approximate the belief states with distributions such as diagonal Gaussian~\citep{lee2020stochastic}, Gaussian mixture~\citep{tschiatschek2018variational}, categorical distribution~\citep{hafner2020mastering} or particle filters~\citep{ma2020particle}. Most recently, a flow-based recurrent belief state modeling approach has been proposed in~\citep{chen2022flow} to approximate general continuous belief states.

The main difference between POMDPs and MDPs under state adversarial attacks is the way the agent’s observation is determined. In a POMDP, the agent’s partial observation at time step $t$ is determined by a fixed observation function $\mathcal{O}$, where $o_t = \mathcal{O}(s_t,a_t)$. And it is independent of the agent’s policy $\pi$. Instead, in an MDP under state adversarial attacks, a perturbed state $\tilde{s}$ is determined by the attack policy $\omega$, which can adapt to the agent’s policy $\pi$ in general.

\subsection{RL for Stackelberg Markov Games}
Previous work has studied various techniques for solving the Stackelberg equilibrium of asymmetric Markov games, with one player as the leader and the rest being followers. Kononen~\citep{kononen2004asymmetric} proposes an asymmetric multi-agent Q-Learning algorithm and establishes its convergence in the tabular setting. 
Besides value-based approaches, Fiez et al.~\citep{fiez2020implicit}  recently investigated sufficient conditions for a local Stackelberg equilibrium (LSE) and derived gradient-based learning dynamics for Stackelberg games using the implicit function theorem. Follow-up work applied this idea to derive Stackelberg actor-critic~\citep{zheng2022stackelberg} and Stackelberg policy gradient~\citep{vu2022stackelberg} methods. However, all these studies assume that the true state information is accessible to all players, which does not apply to our problem.  

\subsection{More Details About Diffusion-Based Denoising}
\label{DDPM-appendix}
In a DDPM model, the forward process constructs a discrete-time Markov chain as follows. Given an initial state $\mathbf{x}_0$ sampled from $q(\cdot)$, it gradually adds Gaussian noise to $\mathbf{x}_0$ to generate a sequence of noisy states $\mathbf{x}_1$,$\mathbf{x}_2$,...,$\mathbf{x}_K$ where 
$q\left(\mathbf{x}_i \mid \mathbf{x}_{i-1}\right):=\mathcal{N}\left(\mathbf{x}_i ; \sqrt{1-\beta_i} \mathbf{x}_{i-1}, \beta_i \mathbf{I}\right)$ so that $\mathbf{x}_K$ approximates the Gaussian white noise.
\ignore{
\begin{equation}
    \label{forward}
    q\left(\mathbf{x}_{1: T} \mid \mathbf{x}_0\right):=\prod_{t=1}^T q\left(\mathbf{x}_t \mid \mathbf{x}_{t-1}\right), \quad q\left(\mathbf{x}_t \mid \mathbf{x}_{t-1}\right):=\mathcal{N}\left(\mathbf{x}_t ; \sqrt{1-\beta_t} \mathbf{x}_{t-1}, \beta_t \mathbf{I}\right)
\end{equation}
}
Here $\beta_i$ is precalculated according to a variance schedule and $\mathbf{I}$ is the identity matrix. 
The reverse process is again a Markov chain that starts with $\mathbf{x}_K$ sampled from the Gaussian white noise $\mathcal{N}(0,\mathbf{I})$ and learns to remove the noise added in the forward process to regenerate $q(\cdot)$. This is achieved through the reverse transition $p_\theta\left(\mathbf{x}_{i-1} \mid \mathbf{x}_i\right):=\mathcal{N}\left(\mathbf{x}_{i-1} ; \boldsymbol{\mu}_\theta\left(\mathbf{x}_i, i\right), \mathbf{\Sigma}_\theta\left(\mathbf{x}_i, i\right)\right)$ where $\theta$ denotes the network parameters used to approximate the mean and the variance added in the forward process. 
\ignore{
\ref{reverse}\citep{ho2020denoising}. 
\begin{equation}
    \label{reverse}
    p_\theta\left(\mathbf{x}_{0: T}\right):=p\left(\mathbf{x}_T\right) \prod_{t=1}^T p_\theta\left(\mathbf{x}_{t-1} \mid \mathbf{x}_t\right), \quad p_\theta\left(\mathbf{x}_{t-1} \mid \mathbf{x}_t\right):=\mathcal{N}\left(\mathbf{x}_{t-1} ; \boldsymbol{\mu}_\theta\left(\mathbf{x}_t, t\right), \mathbf{\Sigma}_\theta\left(\mathbf{x}_t, t\right)\right)
\end{equation}
}
As mentioned in the main text, we modify the reverse process by starting from a perturbed state $\tilde{s} + \phi$ instead of $\mathbf{x}_K$, where $\phi$ is pixel-wise noise sampled from Gaussian distribution $\mathcal{N}(0,\epsilon_\phi^2)$. We then take $k$ reverse steps with $k \ll K$, according to the observation that a perturbed state only introduces a small amount of noise to the true state due to the attack budget $\epsilon$. We observe in our experiments that using a large $k$ does not hurt the performance, although it increases the running time (see Figures~\ref{fig:steps-diff} and~\ref{fig:fps} in Appendix \ref{sec:expr-result-appendix}).

The progressive distillation diffusion model~\citep{salimans2022progressive} can distill an $N$ steps sampler to a new sampler of $N/2$ steps with little degradation of sample quality. Thus with a $1024$ step sampler, we could generate $512$ step, $256$ step, ..., and $8$ step samplers. Notice that a single reverse step in an $8$ step sampler will have an equivalent effect of sampling multiple steps in the original $1024$ step sampler. By choosing a proper sampler generated by progressive distillation (we report every model we used in \ref{sec:expr-setup-appendix}), we could accelerate our diffusion process while preserving sample quality at the same time.

\section{More Graphs and Examples}\label{graph-exmaples-appendix}
\subsection{An Example of Belief Update}
Figure \ref{fig:statesshowcase} illustrates the relations between a true state $s$, the perturbed state $\tilde{s}$, and the worst-case state $\Bar{s} \in B_\epsilon(\tilde{s})$ for which the action is chosen.
\begin{figure}
    \centering
    \includegraphics[width=0.4\textwidth]{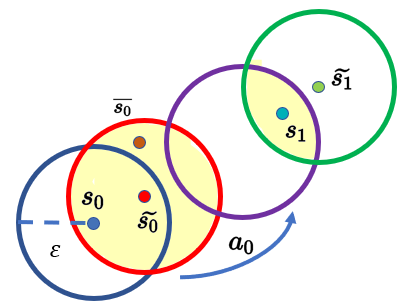}
    \caption{
    \small{True, perturbed, and worst-case states in Algorithm~\ref{Q-learning} and belief update. Beginning with true state $s_0$ and perturbed state $\tilde{s}_0$, the agent will have an initial belief, i.e., the $\epsilon$ ball centered at $\tilde{s}_0$. After taking action $a_0$, the belief is updated to the region marked by the purple ball. When observing the next perturbed state $\tilde{s_1}$, the agent will update belief by taking the intersection of the purple ball and the green ball.}}
    \label{fig:statesshowcase}
\end{figure}
\subsection{An Example of Invalid States in Pixel-wise Perturbations}\label{sec:invalid-example-appendix}
\begin{figure}
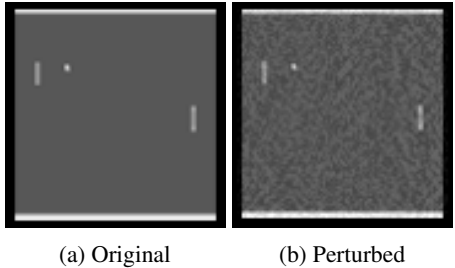

\centering
\begin{subfigure}{0.21\textwidth}
    \includegraphics[width=\textwidth]{original_1.png}
    \caption{Original}
\end{subfigure}
\begin{subfigure}{0.21\textwidth}
    \includegraphics[width=\textwidth]{Pong_adv.png}
    \caption{Perturbed}
\end{subfigure}      
\caption{\small{An example of valid vs. invalid states in Pong.
}}
\label{fig:appendix-invalid-example}
\end{figure}
For example, the white bar shown in Figure \ref{fig:appendix-invalid-example} in the Atari Pong game will not change during gameplay and has a grayscale value of $236/255$. However, a pixel-wise state perturbation attack such as PGD with attack budget $\epsilon = 15/255$ will change the pixel values in the white bar to a range of $221/255 - 251/255$ so that the perturbed states become invalid.

\section{Proofs}\label{proof-appendix}
\subsection{Proof of Theorem~\ref{performance bound}}
In this section, we prove 
Theorem~\ref{performance bound}. Recall that $\Tilde{\pi} := \pi \circ \omega_\pi$. We first establish the Lipchitz continuity of $Q^{\Tilde{\pi}}$, the Q-value when the agent follows policy $\pi$ and the attack follows policy $\omega_\pi$.

\ignore{
\begin{definition} Bellman Operator of state perturbation Q-function.\\
For a given policy $\pi \circ \omega$,
\begin{equation}
    T^{\pi \circ \omega}Q = \Sigma_{\tilde{s} \in S}P(\Tilde{s}|s,a)[R(s,a) + \gamma Q(\Tilde{s},\pi(\omega(\Tilde{s})))]
\end{equation}
    
\end{definition}
\begin{definition} State perturbation Q-function approximation\\
\begin{equation}
    \resizebox{0.5\textwidth}{!}{${Q_{n+1}(s,a)=\Sigma_{\tilde{s} \in S}P(\Tilde{s}|s,a)[R(s,a) + \gamma Q_{n}(\Tilde{s},\pi(\omega(\Tilde{s})))]}$}
\end{equation}

Where 
\begin{equation*}
\begin{aligned}
  &\omega_{\pi}\circ Q_n(\Tilde{s}) = \mathrm{argmin}_{s_a\in B_{\epsilon(\Tilde{s})}}Q_n(s,\pi(s')))\\ 
 &\pi(s',Q_n) = \mathrm{argmax}_{a \in A}\mathrm{min}_{\Bar{s}\in B_{\epsilon(s')}}Q_n(\Bar{s},a).
\end{aligned}
\end{equation*}
For easier notation, we denote $\pi\circ\omega$ to be $\Tilde{\pi}$.
Notice when $\Tilde{\pi}$ is a policy derived from $Q_n$, the approximation process of $Q_n$ is equal to  $T^{\Tilde{\pi}}Q_n = Q_{n+1}$.
\end{definition}
Then we made key assumptions about the transition function and reward function of the MDP in order to prove our results.
\begin{assumption} Lipschitz continuity of transition function and reward function\\
$\forall s_1,s_2 \in S, \forall a \in A, ||R(s_1,a) - R(s_2,a)||\le l_r||s_1-s_2||$\\
$\forall s_1,s_2 \in S, \forall a \in A, \forall s' \in S,$ 
$ ||P(s'|s_1,a)-P(s'|s_2,a))|| \le l_p||s_1-s_2||$
\end{assumption}
}

\begin{lemma} $Q^{\Tilde{\pi}}$ is Lipchitz continuous for any $\pi$ \label{Qlip--appendix}, i.e., $\forall s,s' \in S, \forall a \in A$, 
\begin{equation}
    |Q^{\Tilde{\pi}}(s,a)-Q^{\Tilde{\pi}}(s',a)|\le \mathcal{L_{Q_s}}\|s-s'\|
\end{equation}
where $L_{Q_f} = l_r + \frac{R_{max}}{1-\gamma}|S|l_p$
\end{lemma}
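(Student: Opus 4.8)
The plan is to use the fixed-point characterization of $Q^{\tilde{\pi}}$ together with a single structural observation that removes any need for a recursive argument. Fix an arbitrary agent policy $\pi$, let $\omega_\pi$ be its best response, and set $\tilde{\pi} := \pi\circ\omega_\pi$. Since $\tilde{\pi}$ is a fixed deterministic joint policy, $Q^{\tilde{\pi}}$ is the unique fixed point of the operator in Definition~\ref{Bellman}, so for every $s\in S$ and $a\in A$,
\begin{equation*}
Q^{\tilde{\pi}}(s,a) = R(s,a) + \gamma \sum_{s'\in S} P(s'|s,a)\, Q^{\tilde{\pi}}\!\big(s',\pi(\omega_\pi(s'))\big).
\end{equation*}
The key point I would stress is that the continuation term $V(s') := Q^{\tilde{\pi}}(s',\pi(\omega_\pi(s')))$ depends only on the summation variable $s'$, not on the current state $s$, because the future action $\pi(\omega_\pi(s'))$ is a function of $s'$ alone. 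Consequently, when comparing the value at two states $s_1,s_2$ under the \emph{same} action $a$, this term factors out and no self-referential Lipschitz constant needs to be solved for.

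Next I would bound the continuation value. Following $\tilde{\pi}$ from any state yields a discounted reward stream bounded in absolute value by $\sum_{t\ge 0}\gamma^t R_{max} = \frac{R_{max}}{1-\gamma}$, so $|V(s')|\le \frac{R_{max}}{1-\gamma}$ for all $s'$ by Assumption~\ref{rmaxassumption}. Subtracting the two Bellman expansions and applying the triangle inequality gives
\begin{equation*}
|Q^{\tilde{\pi}}(s_1,a)-Q^{\tilde{\pi}}(s_2,a)| \le |R(s_1,a)-R(s_2,a)| + \gamma\sum_{s'\in S}|P(s'|s_1,a)-P(s'|s_2,a)|\,|V(s')|.
\end{equation*}
I would then invoke the Lipschitz bounds of Assumption~\ref{lipassumption}, namely $|R(s_1,a)-R(s_2,a)|\le l_r\|s_1-s_2\|$ and $|P(s'|s_1,a)-P(s'|s_2,a)|\le l_p\|s_1-s_2\|$, and use that the sum over $s'$ has $|S|$ terms each weighted by at most $\frac{R_{max}}{1-\gamma}$.

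Collecting the two contributions yields $|Q^{\tilde{\pi}}(s_1,a)-Q^{\tilde{\pi}}(s_2,a)| \le \big(l_r + \gamma |S| l_p \tfrac{R_{max}}{1-\gamma}\big)\|s_1-s_2\|$, and since $\gamma\le 1$ this is at most $\big(l_r + |S| l_p \tfrac{R_{max}}{1-\gamma}\big)\|s_1-s_2\| = L_{Q_f}\|s_1-s_2\|$, as claimed. I do not expect a genuine obstacle here: the only thing to get right is the realization that fixing the action $a$ on both sides decouples the immediate reward and transition perturbations from the future, so the usual contraction-style recursion (which would instead produce a $1/(1-\gamma)$-type blow-up in the Lipschitz constant) is unnecessary. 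The two places to be careful are (i) that the absolute bound $\frac{R_{max}}{1-\gamma}$ on $Q^{\tilde{\pi}}$ really requires rewards bounded in magnitude, and (ii) correctly absorbing the factor $\gamma$ into the stated constant via $\gamma\le 1$.
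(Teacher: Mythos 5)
Your proposal is correct and follows essentially the same route as the paper's proof: expand $Q^{\Tilde{\pi}}$ via its Bellman fixed-point equation, cancel the common continuation term (which depends only on $s'$), and bound the reward and transition differences by Assumption~\ref{lipassumption} and the continuation value by $\frac{R_{max}}{1-\gamma}$. Your direct bound $|V(s')|\le \frac{R_{max}}{1-\gamma}$ is in fact slightly cleaner than the paper's detour through $V_{\pi\circ\omega_\pi}(s')\le V_\pi(s')$ (which sits awkwardly inside an absolute value with possibly negative weights), and your caveat that this step implicitly needs $|R|\le R_{max}$ rather than only an upper bound applies equally to the paper's argument.
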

\begin{proof}
Based on the definition of the action value function under state perturbation, we have \\
    \begin{equation*}
        \begin{aligned}
            Q^{\Tilde{\pi}}(s,a) &= R(s,a) + \Sigma_{s'\in S} P(s'|s,a) \gamma V_{\pi\circ\omega_{\pi}}(s')\\
            & \le R(s,a) + \Sigma_{s'\in S} P(s'|s,a)\gamma V_{\pi}(s')\\
        \end{aligned}
    \end{equation*}
Thus,
    \begin{equation*}
        \begin{aligned}
            &|Q^{\Tilde{\pi}}(s_1,a) - Q^{\Tilde{\pi}}(s_2,a)| \\
            =&|R(s_1,a) - R(s_2,a) + \Sigma_{s'\in S}[(P(s'|s_1,a) - P(s'|s_2,a))V_{\pi\circ\omega_{\pi}}(s')]|\\
            \le& |R(s_1,a) - R(s_2,a)| + \Sigma_{s'\in S}|[(P(s'|s_1,a) - P(s'|s_2,a))V_{\pi}(s')]|\\
            \overset{(a)}{\le}& l_r\|s_1-s_2\| + \mathrm{max}_{s' \in S}V_\pi(s')|S|P(s'|s_1,a) - P(s'|s_2,a)|\\
            \overset{(b)}{\le}& l_r\|s_1-s_2\| + \frac{R_{max}}{1-\gamma}|S|l_p\|s_1-s_2\|\\
            {\le}& (l_r + \frac{R_{max}}{1-\gamma}|S|l_p)\|s_1-s_2\|\\
            =& \mathcal{L_{Q_s}}\|s_1-s_2\|\\
        \end{aligned}
    \end{equation*}
where (a) follows from Assumption~\ref{lipassumption} and (b) follows from Assumptions~\ref{lipassumption} and~\ref{rmaxassumption} and the definition of $V_\pi$. 
\end{proof}

\begin{lemma} For $Q_n$ defined in Algorithm~\ref{Q-iteration}, the Bellman approximation error is bounded by \label{approximation error bound--appendix}
\begin{equation}
    \|T^*Q_n-Q_{n+1}\|_\infty \le 2\epsilon\gamma(l_r+l_p|S|\frac{R_{max}}{1-\gamma})
\end{equation}
\begin{proof}
\begin{equation*}
    \begin{aligned}
     &\|T^*Q_n-Q_{n+1}\|_\infty\\
    =&\underset{s \in S, a\in A}{\mathrm{max}} |R(s,a) + \gamma \Sigma_{s'}P(s'|s,a)\underset{a \in A}{\mathrm{max}} Q_n(s',a) - [R(s,a) + \gamma \Sigma_{s'}P(s'|s,a)Q_n(s',\pi_n(\omega_{\pi_n}(s')))]|\\
    =& \gamma\underset{s \in S, a\in A}{\mathrm{max}}\underset{s'\in S}{\Sigma} P(s'|s,a)|\underset{a \in A}{\mathrm{max}}Q_n(s',a) - Q_n(s',\Tilde{\pi}_n(s'))|\\
    \le& \gamma \underset{{s'}\in S}{\mathrm{max}}|\underset{a \in A}{\mathrm{max}}Q_n(s',a) - Q_n(s',\Tilde{\pi}_n(s'))|
    \end{aligned}
\end{equation*}
Let $\Tilde{s'} = \omega_{\pi_n}(s')$ denote the perturbation of the true state $s'$, and $\Tilde{a}$ and $\Bar{s'}$ denote the agent's action when observing $\Tilde{s'}$ and the worst-case state in $B_{\epsilon}(\Tilde{s'})$ that solves the $\mathrm{maximin}$ problem, respectively. We then have
\ignore{
\begin{equation*}
    \begin{aligned}
        a' &:= \underset{a \in A}{\mathrm{argmax}}Q_n(s',a), \ \ \Tilde{a} := \Tilde{\pi}_n(s'), \ \
        \Bar{s} := \underset{\mathrm{s \in B_{2\epsilon} (s')}}{\mathrm{argmin}}Q_n(s,a'). 
    \end{aligned}
\end{equation*}
}
\begin{equation}
    Q_n(s',\Tilde{a}) {\ge} Q_n(\Bar{s'},\Tilde{a}) = \underset{a \in A}{\mathrm{max}}\underset{s \in B_{\epsilon}(\Tilde{s'})}{\mathrm{min}}Q_n(s,a).
\end{equation}
where the first inequality is due to the fact that $\Bar{s'}$ obtains the worst-case Q-value under action $\Tilde{a}$, across all states in $B_\epsilon(\Tilde{s'})$ including $s'$.
It follows that
\begin{equation*}
    \begin{aligned}
        \|T^*Q_n-Q_{n+1}\|_\infty 
        \le& \gamma \underset{s'\in S}{\mathrm{max}}|\underset{a \in A}{\mathrm{max}}Q_n(s',a) - Q_n(s',\Tilde{\pi}(s'))|\\
        \le& \gamma\underset{s'\in S}{\mathrm{max}}|\underset{a\in A}{\mathrm{max}}Q_n(s',a) - \underset{a \in A}{\mathrm{max}}\underset{s \in B_{\epsilon}(\Tilde{s'})}{\mathrm{min}}Q_n(s,a)|\\
        \le& \gamma \underset{s' \in S, a \in A}{\mathrm{max}}|Q_n(s',a) - \underset{s \in B_{\epsilon}(\Tilde{s'})}{\mathrm{min}}Q_n(s,a)|\\
            \overset{(a)}{\le}& \gamma \underset{s' \in S, a \in A}{\mathrm{max}}|Q_n(s',a) - \underset{s \in B_{2\epsilon}(s')}{\mathrm{min}}Q_n(s,a)|\\
        \overset{(b)}{\le}& 2\gamma \epsilon \mathcal{L_{Q_s}}
    \end{aligned}
\end{equation*}
where (a) is due to $\|\Tilde{s'}-{s'}\| \leq \epsilon$ and (b) follows from 
Lemma~\ref{Qlip--appendix}.
\end{proof}
\end{lemma}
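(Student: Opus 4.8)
The plan is to read Algorithm~\ref{Q-iteration} as an instance of approximate value iteration and to run an error-propagation argument in the spirit of Proposition~6.1 of~\citep{neuro}, combining the three facts already in hand: the one-step Bellman approximation bound $\|T^*Q_n - Q_{n+1}\|_\infty \le \Delta$ of Lemma~\ref{approximation error bound--appendix} (recall $\Delta = 2\gamma\epsilon\,\mathcal{L}_{Q_s}$, whose value ultimately rests on the Lipschitz estimate of Lemma~\ref{Qlip--appendix}), the $\gamma$-contraction of the optimal Bellman operator $T^*$ with fixed point $Q^*$, and the $\gamma$-contraction of $T^{\pi\circ\omega_\pi}$ for a fixed $\pi$ with unique fixed point $Q^{\tilde\pi}$ from Lemma~\ref{Operator Contraction--appendix}. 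Because $\tilde\pi_n = \pi_n\circ\omega_{\pi_n}$ varies with $n$, the iterates $Q_n$ need not converge, so I would work entirely with recursive inequalities and pass to the $\limsup$ at the end rather than argue convergence.

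First I would control $a_n := \|Q^* - Q_n\|_\infty$. Writing $Q^* = T^*Q^*$ and inserting $T^*Q_n$, the triangle inequality, the contraction of $T^*$, and Lemma~\ref{approximation error bound--appendix} give $a_{n+1} \le \|T^*Q^* - T^*Q_n\|_\infty + \|T^*Q_n - Q_{n+1}\|_\infty \le \gamma a_n + \Delta$. Unrolling this recursion yields $\limsup_{n} a_n \le \Delta/(1-\gamma)$, and since $\|Q_{n+1}-Q_n\|_\infty \le a_{n+1}+a_n$, it follows that $\limsup_{n}\|Q_{n+1}-Q_n\|_\infty \le 2\Delta/(1-\gamma)$.

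Next I would bound how far each iterate sits from the true value of the policy it induces. Since $Q^{\tilde\pi_n}$ is the fixed point of the $\gamma$-contraction $T^{\tilde\pi_n}$ and $Q_{n+1} = T^{\tilde\pi_n}Q_n$, I have $\|Q_n - Q^{\tilde\pi_n}\|_\infty \le \|Q_n - Q_{n+1}\|_\infty + \|T^{\tilde\pi_n}Q_n - T^{\tilde\pi_n}Q^{\tilde\pi_n}\|_\infty \le \|Q_n - Q_{n+1}\|_\infty + \gamma\|Q_n - Q^{\tilde\pi_n}\|_\infty$, which rearranges to $\|Q_n - Q^{\tilde\pi_n}\|_\infty \le \frac{1}{1-\gamma}\|Q_{n+1}-Q_n\|_\infty$. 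I would then decompose the target quantity through the iterate $Q_{n+1}$ and apply the contraction of $T^{\tilde\pi_n}$ to the second term, giving $\|Q^* - Q^{\tilde\pi_n}\|_\infty \le a_{n+1} + \|T^{\tilde\pi_n}Q_n - T^{\tilde\pi_n}Q^{\tilde\pi_n}\|_\infty \le a_{n+1} + \frac{\gamma}{1-\gamma}\|Q_{n+1}-Q_n\|_\infty$. Taking $\limsup$ and substituting the two bounds from the previous paragraph yields $\frac{\Delta}{1-\gamma} + \frac{\gamma}{1-\gamma}\cdot\frac{2\Delta}{1-\gamma} = \frac{(1-\gamma)+2\gamma}{(1-\gamma)^2}\Delta = \frac{1+\gamma}{(1-\gamma)^2}\Delta$, which is exactly the claimed bound of Theorem~\ref{performance bound}.

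The contraction estimates are routine; the one place demanding care is the final decomposition and in particular the bookkeeping of the constant. Routing the triangle inequality through $Q_n$ instead of $Q_{n+1}$, or bounding $\|Q_n - Q^{\tilde\pi_n}\|_\infty$ by a cruder split, inflates the constant to something like $\frac{3-\gamma}{(1-\gamma)^2}$; recovering precisely $\frac{1+\gamma}{(1-\gamma)^2}$ depends on inserting $Q_{n+1}$ so that the $\gamma$-contraction of $T^{\tilde\pi_n}$ is applied \emph{before} invoking the one-step error bound of Lemma~\ref{approximation error bound--appendix}. I would also flag the need to justify the interchange of $\limsup$ with the additive bounds, which is immediate from subadditivity of $\limsup$ together with $\limsup_n a_{n+1} = \limsup_n a_n$.
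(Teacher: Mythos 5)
Your proposal does not prove the statement at hand; it proves a different one. The statement is Lemma~\ref{approximation error bound--appendix} itself --- the one-step bound $\|T^*Q_n-Q_{n+1}\|_\infty \le 2\epsilon\gamma(l_r+l_p|S|\frac{R_{max}}{1-\gamma})$ --- yet your opening move lists ``the one-step Bellman approximation bound $\|T^*Q_n - Q_{n+1}\|_\infty \le \Delta$ of Lemma~\ref{approximation error bound--appendix}'' among the facts already in hand. Everything you then develop is an error-propagation argument that takes this lemma as an input and derives Theorem~\ref{performance bound}; as a proof of the lemma it is circular, and the lemma's actual content is never touched. That content is where all the problem-specific work lives: (i) expanding $T^*Q_n$ and $Q_{n+1} = T^{\Tilde{\pi}_n}Q_n$ so that the rewards cancel and the gap reduces to $\gamma\,\mathrm{max}_{s'}\,|\mathrm{max}_{a}Q_n(s',a) - Q_n(s',\Tilde{\pi}_n(s'))|$; (ii) the maximin structure of $\pi_n$, which with $\Tilde{s'} = \omega_{\pi_n}(s')$ gives $Q_n(s',\Tilde{a}) \ge \mathrm{max}_{a}\,\mathrm{min}_{s \in B_{\epsilon}(\Tilde{s'})} Q_n(s,a)$, valid because the true state $s'$ lies in $B_\epsilon(\Tilde{s'})$; (iii) the geometric containment $B_\epsilon(\Tilde{s'}) \subseteq B_{2\epsilon}(s')$, which is the sole source of the factor $2\epsilon$ in the bound; and (iv) the Lipschitz estimate $\mathcal{L}_{Q_s} = l_r + \frac{R_{max}}{1-\gamma}|S|l_p$ of Lemma~\ref{Qlip--appendix} applied to $Q_n$ --- a step that itself merits a remark, since that lemma is stated for the fixed-point functions $Q^{\Tilde{\pi}}$ and one must argue the iterates $Q_n$ enjoy the same regularity. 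None of these four ingredients appears anywhere in your writeup, so there is nothing in it to salvage toward the lemma.

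For what it is worth, the argument you did write is a correct derivation of Theorem~\ref{performance bound} \emph{given} the lemma, and it takes a mildly different route from the paper's: you bound $\|Q_n - Q^{\Tilde{\pi}_n}\|_\infty \le \frac{1}{1-\gamma}\|Q_{n+1}-Q_n\|_\infty$ and pass through $\mathrm{limsup}_n \|Q_{n+1}-Q_n\|_\infty \le 2\Delta/(1-\gamma)$, whereas the paper derives $\|Q^*-Q^{\Tilde{\pi}_n}\|_\infty \le \frac{2\gamma\|Q^*-Q_n\|_\infty + \Delta}{1-\gamma}$ directly and then inserts $\mathrm{limsup}_n\|Q^*-Q_n\|_\infty \le \Delta/(1-\gamma)$; both decompositions give the same constant $\frac{1+\gamma}{(1-\gamma)^2}$, and your remark about where the contraction must be applied to avoid inflating it to $\frac{3-\gamma}{(1-\gamma)^2}$ is sound. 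But this answers the wrong question: to prove the lemma you must carry out the reward-cancellation, maximin, $2\epsilon$-ball, and Lipschitz computation sketched above.
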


\begin{lemma} Given any policy $\Tilde{\pi} = \pi \circ \omega_{\pi}$ where $\pi$ is a fixed policy, $T^{\Tilde{\pi}}$ is a contraction. \label{Operator Contraction--appendix}
\begin{proof}
    \begin{equation*}
    {
        \begin{aligned}
            \|T^{\Tilde{\pi}}Q_1 - T^{\Tilde{\pi}}Q_2\|_\infty &= 
            \underset{s\in S, a\in A}{\mathrm{max}}\underset{s' \in S}{\Sigma}\gamma P(s'|s,a)|Q_1(s',\pi(\omega(s')))-Q_2(s',\pi(\omega(s')))|\\
            &\le \underset{s' \in S}{\mathrm{max}}\gamma|Q_1(s',\pi(\omega(s')))-Q_2(s',\pi(\omega(s')))|\\
            &\le \underset{s' \in S, a \in A}{\mathrm{max}}\gamma|Q_1(s',a) - Q_2(s',a)|\\
            &= \gamma||Q_1 - Q_2||_\infty
        \end{aligned}
        }
    \end{equation*}
Thus, for any given policy $\pi$, $T^{\Tilde{\pi}}$ is a contraction.
\end{proof}
\end{lemma}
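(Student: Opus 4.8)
The plan is to recognize that once the agent policy $\pi$ and the attack mapping $\omega$ are both held fixed, the composite $\tilde\pi = \pi \circ \omega$ assigns to each state $s'$ a single, $Q$-independent action $\tilde\pi(s') := \pi(\omega(s'))$. Consequently $T^{\tilde\pi}$ from Definition~\ref{Bellman} is nothing but the ordinary policy-evaluation Bellman operator for the deterministic policy $\tilde\pi$, and the classical $\gamma$-contraction argument for policy evaluation applies verbatim. The one point that must be emphasized is that the \emph{same} fixed $\omega$ is used when applying the operator to both $Q_1$ and $Q_2$; this is precisely the distinction the main text draws between $T^{\pi \circ \omega_\pi}$ (a contraction) and the iteration operator $T^{\pi_n \circ \omega_{\pi_n}}$ (generally not a contraction, because there the best response $\omega_{\pi_n}$ is recomputed from the current $Q_n$).

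Concretely, I would take any two action-value functions $Q_1, Q_2$ and form the difference $T^{\tilde\pi}Q_1(s,a) - T^{\tilde\pi}Q_2(s,a)$. Since both images share the identical reward term $R(s,a)$, it cancels, leaving only the discounted expectation of the difference evaluated at the fixed successor action $\tilde\pi(s')$:
\begin{equation*}
|T^{\tilde\pi}Q_1(s,a) - T^{\tilde\pi}Q_2(s,a)| = \gamma \Big| \Sigma_{s'\in S} P(s'|s,a)\big[Q_1(s',\tilde\pi(s')) - Q_2(s',\tilde\pi(s'))\big]\Big|.
\end{equation*}
Then I would push the absolute value inside the sum via the triangle inequality, bound each term $|Q_1(s',\tilde\pi(s')) - Q_2(s',\tilde\pi(s'))|$ by $\|Q_1 - Q_2\|_\infty$, and use $\Sigma_{s'\in S} P(s'|s,a) = 1$ to collapse the transition weights. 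This yields $|T^{\tilde\pi}Q_1(s,a) - T^{\tilde\pi}Q_2(s,a)| \le \gamma \|Q_1 - Q_2\|_\infty$ uniformly in $(s,a)$; taking the maximum over $(s,a)$ gives $\|T^{\tilde\pi}Q_1 - T^{\tilde\pi}Q_2\|_\infty \le \gamma \|Q_1 - Q_2\|_\infty$, which is exactly the $\gamma$-contraction claim.

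There is essentially no analytic obstacle here — the computation is the textbook policy-evaluation contraction estimate. The only conceptual care required is the interpretive point flagged above: the argument crucially relies on $\tilde\pi$ being a fixed action-selection rule, so that the successor action $\tilde\pi(s')$ is the same when evaluating $Q_1$ and $Q_2$. If one instead let the inner $\mathrm{argmin}$ and $\mathrm{argmax}$ operations that define $\omega$ and $\pi$ depend on the particular $Q$ being evaluated, then neither the cancellation of the reward term nor the term-by-term domination by $\|Q_1 - Q_2\|_\infty$ would go through — which is the precise reason the lemma is stated for a fixed $\pi$ together with its induced fixed attack $\omega_\pi$.
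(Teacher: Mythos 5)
Your proposal is correct and follows essentially the same route as the paper's proof: cancel the shared reward term, push the absolute value through the transition average, and bound the difference evaluated at the fixed action $\pi(\omega(s'))$ by $\|Q_1 - Q_2\|_\infty$, yielding the $\gamma$-contraction. Your emphasis that the composed policy must be held fixed (in contrast to the $Q_n$-dependent operator $T^{\pi_n \circ \omega_{\pi_n}}$) is exactly the distinction the paper itself draws, including via its explicit counterexample in the appendix.
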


With Lemmas~\ref{approximation error bound--appendix} and~\ref{Operator Contraction--appendix}, we are ready to prove Theorem \ref{performance bound--appendix}.
\begin{theorem-appx} The gap between $Q^{\Tilde{\pi}_n}$ and $Q^*$ is bounded by \label{performance bound--appendix}\\
\begin{equation*}
\mathrm{limsup}_{n\rightarrow\infty}\|Q^*-Q^{\Tilde{\pi}_n}\|_{\infty} \le \frac{1+\gamma}{(1-\gamma)^2}\Delta
\end{equation*}
where $\Tilde{\pi}_n$ is obtained by Algorithm~\ref{Q-iteration} and $\Delta = 2\epsilon\gamma(l_r+l_p|S|\frac{R_{max}}{1-\gamma})$.
\end{theorem-appx}
\begin{proof}
    \begin{equation*}
    {
        \begin{aligned}
            \|Q^*-Q^{\Tilde{\pi}_n}\|_\infty
            &\overset{(a)}{\le} \|T^*Q^*-T^*Q_n\|_\infty+\|T^*Q_n-T^{\Tilde{\pi}_n}Q^{\Tilde{\pi}_n}\|_\infty\\
            &\le \|T^*Q^*-T^*Q_n\|_\infty + \|T^*Q_n-T^{\Tilde{\pi}_n}Q_n\|_\infty + \|T^{\Tilde{\pi}_n}Q_n-T^{\Tilde{\pi}_n}Q^{\Tilde{\pi}_n}\|_\infty\\
            &\overset{(b)}{\le} \gamma\|Q^*-Q_n\|_\infty + \|T^*Q_n - Q_{n+1}\|_\infty+\gamma(\|Q_n - Q^*\|_\infty+\|Q^*-Q^{\Tilde{\pi}_n}\|_\infty)
        \end{aligned}}
    \end{equation*}
    where (a) follows from $Q^*$ is the fixed point of $T^*$, $Q^{\Tilde{\pi}_n}$ is the fixed point of $T^{\Tilde{\pi}_n}$, and the triangle inequality, and (b) follows from both $T^*$ and $T^{\Tilde{\pi}_n}$ (for a fixed $\pi_n$) are contractions. This together with Lemma~\ref{approximation error bound--appendix} implies that
    \begin{equation} \label{intermediate equation}
        \|Q^*-Q^{\Tilde{\pi}_n}\|_\infty \le \frac{2\gamma\|Q^*-Q_n\|_\infty) + \Delta}{1-\gamma}
    \end{equation}
    We then bound $\|Q^*-Q_n\|_\infty$ as follows
    \begin{equation*}
        {
        \begin{aligned}
            &\|Q^*-Q_{n+1}\|_\infty 
            \le \|T^*Q^* - T^*Q_n\|_\infty +\|T^*Q_n-Q_{n+1}\|_\infty
            \le \gamma\|Q^* - Q_n\|_\infty + \Delta,
        \end{aligned}
        }
    \end{equation*}
    which implies that 
    \begin{equation}
            \underset{n \rightarrow \infty}{\mathrm{limsup}} \|Q^*-Q_n\|_\infty \leq \frac{\Delta}{1-\gamma}
    \end{equation}
    Plug this back in~\eqref{intermediate equation}, we have 
    \begin{equation}
        \underset{n \rightarrow \infty}{\mathrm{limsup}}\|Q^*-Q^{\Tilde{\pi}_n}\|_\infty \le \frac{1+\gamma}{(1-\gamma)^2}\Delta
    \end{equation}
    where $\Delta = 2\epsilon\gamma(l_r+l_p|S|\frac{R_{max}}{1-\gamma})$.
\end{proof}

\subsection{A counterexample to $T^{\pi_n \circ \omega_{\pi_n}}$ being a contraction when $\pi_n \circ \omega_{\pi_n}$ is not fixed}
\begin{table}[]
\tiny
\centering
\resizebox{0.65\textwidth}{!}{
\begin{tabular}{|c|c|c|c|c|c|c|c|}
\hline
$Q_1$    & $s_1$ & $s_2$ & $s_3$ & $Q_2$    & $s_1$ & $s_2$ & $s_3$ \\ \hline
$a_1$ & 12    & 11    & 3     & $a_1$ & 4     & 2     & -2    \\ \hline
$a_2$ & 12    & 10    & 2     & $a_2$ & 4     & 0     & -1    \\ \hline
$a_3$ & 12    & 8     & 1     & $a_3$ & 4     & 1     & -3    \\ \hline
\end{tabular}
}
\caption{A counterexample to $T^{\pi_n \circ \omega_{\pi_n}}$ being a contraction}
\label{Counterexample}
\end{table}
Consider an MDP $\langle S,A,P,R,\gamma \rangle$ where $S = \{s_1,s_2,s_3\}$ and $A =\{a_1,a_2,a_3\}$. Suppose that for any $s \in S$ and $a \in A$, $P(s_2|s,a) = 1$ and $P(s'|s,a) = 0$  for $s' \neq s_2$, and $B_{\epsilon}(s) = \{s_1,s_2,s_3\}$. Consider the two $Q$-functions shown in Table \ref{Counterexample}. We have $||Q_1-Q_2||_{\infty} = |Q_1(s_2,a_2)- Q_2(s_2, a_2)| = 10$. However, when $\tilde{\pi}_1 = \pi_1 \circ \omega_{\pi_1}$ is derived from $Q_1$ and $\tilde{\pi}_2 = \pi_2 \circ \omega_{\pi_2}$ is derived from $Q_2$,
\begin{equation*}
    \begin{aligned}
        ||T^{\tilde{\pi}_1}Q_1 - T^{\tilde{\pi}_2}Q_2||_{\infty} &= \underset{s\in S, a\in A}{\mathrm{max}}\underset{s' \in S}{\Sigma}\gamma P(s'|s,a)|Q_1(s',\pi_1(\omega_{\pi_1}(s')))-Q_2(s',\pi_2(\omega_{\pi_2}(s')))|\\
        & = \underset{s\in S, a\in A}{\mathrm{max}}\gamma |Q_1(s_2,\pi_1(\omega_{\pi_1}(s')))-Q_2(s_,\pi_2(\omega_{\pi_2}(s')))|\\
        & \geq \gamma|Q_1(s_2,\pi_1(\omega_{\pi_1}(s_2)))-Q_2(s_2,\pi_2(\omega_{\pi_2}(s_2)))|\\
        & \overset{(a)}{=}\gamma|Q_1(s_2, a_1) - Q_2(s_2, a_2)|\\
        & = \gamma \times 11\\
        & \overset{(b)}{>} 10 = ||Q_1-Q_2||_{\infty} 
    \end{aligned}
\end{equation*}
where (a) is due to the fact that no matter what $\omega_{\pi}(s_2)$ is, $B_{\epsilon}(\omega_{\pi}(s_2)) = \{s_1,s_2,s_3\} = S$, which implies that $\pi_1(\omega_{\pi_1}(s_2)) = {\mathrm{argmax}_{a \in A}}{\mathrm{min}_{\Bar{s} \in S}}Q_1(\Bar{s}, a) = a_1$, and similarly, $\pi_2(\omega_{\pi_2}(s_2)) = a_2$; (b) holds when $\gamma > \frac{10}{11}$. Therefore, $T^{\pi_1 \circ \omega_{\pi_1}}$ is not a contraction. 

\subsection{Approximate Stackelberg Equilibrium}
\label{Approximate-appendix}
As shown in Theorem 4.11 of \citep{Han22Whatis}, when the initial state distribution $\Pr(s_0)$ is known, there is an agent policy $\pi^*$ that maximizes the worst-case expected state value against the optimal state perturbation attack, that is
\begin{equation}
    \forall \pi, \Sigma_{s_0\in S} \Pr(s_0) V_{\pi^*\circ\omega_{\pi^*}}(s_0) \ge \Sigma_{s_0\in S} \Pr(s_0) V_{\pi\circ\omega_{\pi}}(s_0)    
\end{equation}
In particular, when the initial state $s_0$ is fixed, an optimal policy that maximizes the worst-case state value (against the optimal state perturbation attack) exists. 
Let $V^*(s_0)$ denote this optimal state value. Note that $V^*(s_0)$ is obtained using a different policy for different $s_0$. Then a reasonable definition of an approximate Stackelberg Equilibrium is a policy $\pi$ where its state value under each state $s_0$ is close to $V^*(s_0)$, that is, $|V_{\pi\circ\omega_{\pi}}(s_0) - V^*(s_0)| \leq \Delta$ for some constant $\Delta$. Note that this condition should hold for each state $s_0$. Also note that we compare $\pi$ with a different optimal policy with respect to each $s_0$, thus bypassing the impossibility result stated in Section~\ref{main_impossible}. 

In the paper, we further approximated $V^*(s_0)$ by the optimal value without attacks and derived $\Delta$ in this simplified setting (Theorem~\ref{performance bound}). However, we believe that the general definition given above presents a novel extension of approximate Stackelberg Equilibrium in the standard sense and 
it is an interesting open problem to derive a policy that is nearly optimal under this new definition.

\section{Algorithms}\label{algorithm-appendix}

\begin{algorithm2e}[H]
    \label{Belief Update}
    \SetAlgoLined
    \KwData{Old Belief $M_t$, action $a$, perturbed state $\tilde{s}_{t+1}$.}
    \KwResult{Updated Belief $M_{t+1}$}
    Initialize $M'_t$ to be an empty set\\
    \For{$s$ in $M_{t}$}{
        \For{$s'$ in $S$}{
            \If{$P(s'|s,a) \neq 0$}{
            Add $s'$ to $M'_{t}$
            }
        }
    }
    $M_{t+1} = M'_t \cap B_\epsilon(\tilde{s}_{t+1})$ \\ 
    \textbf{RETURN} $M_{t+1}$
    \caption{Belief Update}
\end{algorithm2e}

\begin{algorithm2e}[H]
    \label{Q-iteration}
    \SetAlgoLined
    \KwResult{Robust Q-function $Q$}
    Initialize $Q_0(s,a) = 0$ for all $s \in S$, $a \in A$\;

    \For{$n=0,1,2,...$}
    { 
    Update RL agent policy: $\forall \tilde{s}\in S, \pi_n(\tilde{s}) = \mathrm{argmax}_{a \in A}\mathrm{min}_{\Bar{s}\in B_\epsilon(\tilde{s})}Q_n(\Bar{s},a)$\;
    Update attacker policy: $\forall s\in S, \omega_{\pi_n}(s) = \mathrm{argmin}_{\tilde{s}\in B_{\epsilon(s)}}Q_n(s,\pi_n(\tilde{s}))$\;
    \For {$ s \in S$}{       
        \For {$a \in A$}{
           {${Q_{n+1}(s,a)=R(s,a) + \gamma \Sigma_{s' \in S}P(s'|s,a)Q_{n}(s',\pi(\omega_{\pi}(s')))}$}\;
              }
        }
    }
    \caption{Pessimistic Q-Iteration}
\end{algorithm2e}

\begin{algorithm2e}[H]
 \KwData{Number of iterations $T$, trained vanilla Q network $Q_v$, PF-RNN belief model $N_p$, target network update frequency $Z$, batch size $D$, exploration parameter $\epsilon'$}
 \KwResult{Robust Q network $Q_r$}
    Initialize replay buffer $\mathcal{B}$, robust Q network $Q_r=Q_v$, target Q network $Q' =  Q_v$, observation history $S_{his}$, action history $A_{his}$\;
    
 \For{t = 0,1,...,T}
 {
  \textcolor{brown}{Use PGD to find the best perturb state $\tilde{s}_t$ that minimizes $Q_r(s_t, \pi(\tilde{s}_t))$, where $\pi$ is derived from $Q_r$ by taking greedy action}\;

  \textcolor{brown}{$M_t = M_t \cap B_\epsilon(\tilde{s}_t)$\;}
 
Choose an action based on belief $M_t$ and $Q_r$ using $\epsilon$-greedy: \textcolor{brown}{$a_t = {\mathrm{argmax}_{a \in A}}\mathrm{min}_{m \in M_t}Q_r(m, a)$} with probability $1-\epsilon'$; otherwise $a_t$ is a random action\;
  
  \textcolor{brown}{Append $\tilde{s}_t$ and $a_t$ to $S_{his}$ and $A_{his}$ and use belief model $N_p(S_{his}, A_{his})$ to generate $M_{t+1}$}\;
  
  Execute action $a_t$ in the environment and observe reward $R_t$ and next true state $s_{t+1}$\;
  
\If{$s_{t+1}$ is a terminal state}{Reset $S_{his}$ and $A_{his}$}
 Store transition $\{s_t, a_t, R_t, s_{t+1}, \textcolor{brown}{M_t}\}$ in $\mathcal{B}$\;
 
 Sample a random minibatch of size $D$ of transitions $\{s_i, a_i, R_i, s_{i+1}, \textcolor{brown}{M_i}\}$ from $\mathcal{B}$\;

 Set $y_i = 
\begin{cases}
    R_i  & \text{for terminal } s_{i+1}\\
    R_i + \textcolor{brown}{\gamma \mathrm{max}_{a'\in A} \mathrm{min}_{m \in M_i}Q'(m, a')} &\text{for non-terminal } s_{i+1}
\end{cases}$\

 Perform a gradient descent step to minimize $Huber(\Sigma_i y_i - Q_r(s_i,a_i))$\;

 Update target network every $Z$ steps\;
 }

\caption{
Belief-Enriched Pessimistic DQN (BP-DQN) Training. We highlight the difference between our algorithm and the vanilla DQN algorithm in \color{brown}{brown}. 
}
\label{valid_state_training}
\end{algorithm2e}

\begin{algorithm2e}[H]
 \KwData{Trained robust Q network $Q_r$, PFRNN belief model $N_p$}
Initialize observation history $S_{his}$ and action history $A_{his}$\;
    
 \For{t = 0,1,...,T}
 {Observe the perturbed state $\tilde{s}_t$\;

 \If{t = 0}{$M_0 = B_\epsilon(\tilde{s}_t)$;}
 
  Select an action based on belief $M_t$ and $Q_r$: $a_t = {\mathrm{argmax}_{a \in A}}\mathrm{min}_{m \in M_t}Q_r(m, a)$\;
  
  Append $\tilde{s}_t$ and $a_t$ to $S_{his}$ and $A_{his}$ and use belief model $N_p(S_{his}, A_{his})$ to generate $M_{t+1}$\;
  
  Execute action $a_t$ in the environment\;

 }
\caption{Belief-Enriched Pessimistic DQN (BP-DQN) Testing}
\label{valid_state_testing}
\end{algorithm2e}

\begin{algorithm2e}[H]
 \KwData{Number of iterations $T$, trained vanilla Q network $Q_v$, diffusion belief model $N_d$, target network update frequency $Z$, batch size $D$, belief size $\kappa_d$, exploration parameter $\epsilon'$, noise level $\epsilon_\phi$}
 \KwResult{Robust Q network $Q_r$}
    Initialize replay buffer $\mathcal{B}$, robust Q network $Q_r=Q_v$, target Q network $Q' =  Q_v$\;
    
 \For{t = 0,1,...,T}
 {
  
  \textcolor{brown}{Use PGD to find the best perturb state $\tilde{s}_t$ that minimizes $Q_r(s_t, \pi(\tilde{s}_t))$, where $\pi$ is derived from $Q_r$ by taking greedy action\;}

    \textcolor{brown}{Sample noise $\phi$ from Gaussian distribution $\mathcal{N}(0,\epsilon_\phi^2)$ pixel-wise with same dimension as $s_t$;}

  \textcolor{brown}{Use the diffusion belief model to generate belief $M_t = N_d(\tilde{s}_t + \phi)$ of size $\kappa_d$\;}
 
    Select an actions based on belief $M_t$ and $Q_r$ using $\epsilon$-greedy: \textcolor{brown}{$a_t = {\mathrm{argmax}_{a \in A}}\mathrm{min}_{m \in M_t}Q_r(m, a)$} with probability $1-\epsilon'$; otherwise $a_t$ is a random action\;
  
  Execute action $a_t$ in environment and observe reward $R_t$ and next true state $s_{t+1}$\;
  
 \textcolor{brown}{Apply the reverse diffusion process to $s_t$ and $s_{t+1}$: $\hat{s}_t = N_d(s_t)$, $\hat{s}_{t+1} = N_d(s_{t+1})$\;}
 
 Store transition $\{\textcolor{brown}{\hat{s}_t}, a_t, R_t, \textcolor{brown}{\hat{s}_{t+1}}, \textcolor{brown}{M_t}\}$ in $\mathcal{B}$\;
 
 Sample a random minibatch of size $D$ of transitions $\{\textcolor{brown}{\hat{s}_i}, a_i, R_i, \textcolor{brown}{\hat{s}_{i+1}}, \textcolor{brown}{M_i}\}$ from $\mathcal{B}$\;

 Set $y_i = 
\begin{cases}
    R_i  & \text{for terminal } \hat{s}_{i+1}\\
    R_i + \textcolor{brown}{\gamma \mathrm{max}_{a'\in A} \mathrm{min}_{m \in M_i}Q'(m, a')} &\text{for non-terminal } \hat{s}_{i+1}
\end{cases}$\

 Perform a gradient descent step to minimize $Huber(\Sigma_i y_i - Q_r(\hat{s}_i,a_i))$\;

 Update target network every $Z$ steps\;
 }

\caption{Diffusion-Assisted Pessimistic DQN (DP-DQN) Training. We highlight the difference between our algorithm and the vanilla DQN algorithm in \color{brown}{brown}.}
\label{invalid_state_training}
\end{algorithm2e}

\begin{algorithm2e}[H]
 \KwData{Trained robust Q network $Q_r$, diffusion belief model $N_d$, noise level $\epsilon_\phi$}
    
 \For{t = 0,1,...,T}
 {Observe the perturbed state $\tilde{s}_t$\;

    Sample noise $\phi$ from Gaussian distribution $\mathcal{N}(0,\epsilon_\phi^2)$ pixel-wise with same dimension as $s_t$;
    
  Generate belief using the diffusion belief model $M_t = N_d(\tilde{s}_t + \phi)$\;
 
  Choose an action based on belief $M_t$ and $Q_r$: $a_t = {\mathrm{argmax}_{a \in A}}\mathrm{min}_{m \in M_t}Q_r(m, a)$\;
  
  Execute action $a_t$ in the environment\;

 }
\caption{Diffusion-Assisted Pessimistic DQN (DP-DQN) Testing}
\label{invalid_state_testing}
\end{algorithm2e}

\section{Experiment Details and Additional Results}
\subsection{Experiment Setup Justification}\label{sec:expr-rationale-appendix}
Although both the BP-DQN and DP-DQN algorithms follow our idea of pessimistic Q-learning, the former is more appropriate for games with a discrete or a continuous but low-dimensional state space such as the continuous Gridworld environment, while the latter is more appropriate for games with raw pixel input such as Atari Games.

In the Gridworld environment, state perturbations can manipulate the semantics of states by changing the coordinates of the agent. In this case, historical information can be utilized to generate beliefs about true states. Following this idea, BP-DQN uses the particle filter recurrent neural network~(PF-RNN) method to predict true states. In principle, 
we can also use BP-DQN on the Atari environments to predict true states through historical data. In practice, however, it is computationally challenging to do so due to the high-dimensional state space ($84\times84$) of the Atari environments. Developing more efficient belief update techniques for large environments remains an active research direction. 

On the other hand, state perturbations are injected pixel-wise in state-of-the-art attacks in Atari games. Consequently, 
they can barely change the semantics of true states Atari environments. 
In this case, historical information becomes less useful, and the diffusion model can effectively ``purify'' the perturbed states to recover the true states from high-dimensional image data. Although it is theoretically possible to use DP-DQN on the Gridworld environment, we conjecture that it is less effective than BP-DQN since it does not utilize historical data, which is crucial to recover true states when perturbations can change the semantic meaning of states as in the case of continuous Gridworld. In particular, we observe that the distributions of perturbed states and true states are very similar in this environment\ignore{ due to the small attack budget}, making it difficult to learn a diffusion model that can map the perturbed states back to true states.

It is an interesting direction to develop strong perturbation attacks that can manipulate the semantics of true states for games with raw pixel input. As a countermeasure, we can potentially integrate diffusion-based state purification and belief-based history modeling to craft a stronger defense.

\subsection{Experiment Setup} \label{sec:expr-setup-appendix}
\noindent{\bf Environments.} 
The continuous state Gridworld is modified 
from the grid maze environment in \citep{ma2020particle}. We create a $10\times10$ map with walls inside. There are also a gold and a bomb in the environment where the agent aims to find the gold and avoid the bomb. The state space is a tuple of two real numbers in $[0,10]\times[0,10]$ representing the coordinate of the agent. The initial state of the agent is randomized. The agent can move in 8 directions, which are up, up left, left, down left, down, down right, right, and up right. By taking an action, the agent moves a distance of 0.5 units in the direction they choose. For example, if the agent is currently positioned at $(x,y)$ and chooses to move upwards, the next state will be $(x,y+0.5)$. If the agent chooses to move diagonally to the upper right, the next state will be $(x+0.5/\sqrt{2}, y+0.5/\sqrt{2})$. If the agent would collide with a wall by taking an action, it remains stationary at its current location during that step. The agent loses $1$ point for each time step before the game ends and gains a reward of $200$ points for reaching the gold and $-50$ points for reaching the bomb. The game terminates once the agent reaches the gold or bomb or spends 100 steps in the game. For Atari games, we choose Pong and Freeway provided by the OpenAI Gym~\citep{brockman2016openai}. 


\noindent{\bf Baselines.} We choose vanilla DQN~\citep{mnih2015human}, SA-DQN~\citep{SAMDP} and WocaR-DQN~\citep{liang2022efficient} as defense baselines. We consider three commonly used attacks to evaluate the robustness of these algorithms: (1) PGD attack~\cite{SAMDP}, which aims to find a perturbed state $\tilde{s}$ that minimizes $Q(s, \pi(\tilde{s}))$ and we set PGD steps $\eta = 10$ for both training and testing usage; (2) MinBest attack~\citep{Abbeel2017perturbation}, which aims to find a perturbed state $\tilde{s}$ that minimizes 
the probability of choosing the best action under $s$, with the probabilities of actions represented by a softmax of Q-values; and (3) PA-AD~\citep{sun2021strongest}, which utilizes RL to find a (nearly) optimal attack policy. For each attack, we choose $\epsilon \in \{0.1,0.5\}$ for the Gridworld environment and $\epsilon \in \{1/255,3/255,15/255\}$ for the Atari games. 
Natural rewards (without attacks) are reported using policies trained under $\epsilon =0.1$ for continuous state Gridworld and $\epsilon = 1/255$ for Atari games.

\noindent{\bf Training and Testing Details.} We use the same network structure as vanilla DQN~\citep{mnih2015human}, which is also used in SA-DQN~\citep{SAMDP} and WocaR-DQN~\citep{liang2022efficient}. We set all parameters as default in their papers when training both SA-DQN and WocaR-DQN. For training our pessimistic DQN algorithm with PF-RNN-based belief (called BP-DQN, see Algorithm~\ref{valid_state_training} in Appendix~\ref{algorithm-appendix}), we set $\kappa_p = |M_t| = 30$, i.e., the PF-RNN model will generate 30 belief states in each time step. For training our pessimistic DQN algorithm with diffusion (called DP-DQN, see Algorithm~\ref{invalid_state_training} in Appendix~\ref{algorithm-appendix}), we set $\kappa_d = |M_t| = 4$, that is, the diffusion model generates 4 purified belief states from a perturbed state. We consider two variants of DP-DQN, namely, DP-DQN-O and DP-DQN-F, which utilize DDPM and Progressive Distillation as the diffusion model, respectively.
For DP-DQN-O, we set the number of reverse steps to $k = 10$ for $\epsilon = 1/255$ or $3/255$ and $k=30$ for $\epsilon = 15/255$, and do not add noise $\phi$ when training and testing DP-DQN-O. For DP-DQN-F, we set $k=1$, sampler step to $64$, and add random noise with $\epsilon_\phi = 5/255$ when training DP-DQN-F. We report the parameters used when testing DP-DQN-F in Table \ref{Parameters_DP-DQN-F}. We sample $C = C' =30$ trajectories to train PF-RNN and diffusion models. All other parameters are set as default for training the PF-RNN and diffusion models. For all other baselines, we train 1 million frames for the continuous Gridworld environment and 6 million frames for the Atari games. For our methods, we 
take the pre-trained vanilla DQN model, and train our method for another 1 million frames. All training and testing are done on a machine equipped with an i9-12900KF CPU and a single RTX 3090 GPU. For each environment, all RL policies are tested in 10 randomized environments with means and variances reported. 

\begin{table}[]
\resizebox{\textwidth}{!}{
\begin{tabular}{|c|c|ccc|ccc|ccc|}
\hline
\multirow{2}{*}{\textbf{Env}} & \multirow{2}{*}{\textbf{Parameter}} & \multicolumn{3}{c|}{\textbf{PGD}}                             & \multicolumn{3}{c|}{\textbf{MinBest}}                         & \multicolumn{3}{c|}{\textbf{PA-AD}}                           \\ \cline{3-11} 
                              &                                     & $\epsilon = 1/255$ & $\epsilon = 3/255$ & $\epsilon = 15/255$ & $\epsilon = 1/255$ & $\epsilon = 3/255$ & $\epsilon = 15/255$ & $\epsilon = 1/255$ & $\epsilon = 3/255$ & $\epsilon = 15/255$ \\ \hline
\multirow{3}{*}{Pong}         & noise level $\epsilon_\phi$         & 3/255              & 5/255              & 10/255              & 3/255              & 1/255              & 1/255               & 2/255              & 3/255              & 10/255              \\ \cline{2-2}
                              & reverse step $k$                    & 1                  & 1                  & 4                   & 1                  & 1                  & 1                   & 2                  & 2                  & 3                   \\ \cline{2-2}
                              & sampler step                        & 128                & 64                 & 128                 & 128                & 128                & 32                  & 64                 & 64                 & 32                  \\ \hline
\multirow{3}{*}{Freeway}      & noise level $\epsilon_\phi$         & 3/255              & 3/255              & 3/255               & 3/255              & 3/255              & 3/255               & 3/255              & 3/255              & 3/255               \\ \cline{2-2}
                              & reverse step $k$                    & 2                  & 4                  & 4                   & 2                  & 4                  & 4                   & 2                  & 4                  & 4                   \\ \cline{2-2}
                              & sampler step                        & 64                 & 64                 & 64                  & 64                 & 64                 & 64                  & 64                 & 64                 & 64                  \\ \hline
\end{tabular}
}
\caption{Parameters Used to Test DP-DQN-F}
\label{Parameters_DP-DQN-F}
\end{table}

\subsection{More Experiment Results}\label{sec:expr-result-appendix}
\subsubsection{More Baseline Results}
Tables \ref{Gridworld_results_appendix} and \ref{Atari_results_appendix} give the complete results for the continuous space Gridworld and the two Atari games, where we include another baseline called Radial-DQN~\citep{oikarinen2021robust}, which adds an adversarial loss term to the nominal loss of regular DRL in order to gain robustness. We find that Radial-DQN fails to learn a reasonable policy in continuous Gridworld as other regularization-based methods. However, Radial-DQN performs well under a small attack budget in Atari games but still fails to respond when the attack budget is high. Our Radial-DQN results for the Atari games were obtained using the pre-trained models~\citep{oikarinen2021robust}, which might explain why the results are better than those reported in~\citep {liang2022efficient}. \ignore{Notice that our results of Radial-DQN are based on an updated implementation. This could be a reason that we achieve a better performance compared to Radial-DQN results reported in \citep{liang2022efficient}.}
\begin{table}[h]
\begin{subtable}{\linewidth}
\resizebox{\textwidth}{!}{
\begin{tabular}{|c|c|c|cc|cc|cc|}
\hline
                                                                                         &                                  &                                           & \multicolumn{2}{c|}{\textbf{PGD}}                                           & \multicolumn{2}{c|}{\textbf{MinBest}}                                        & \multicolumn{2}{c|}{\textbf{PA-AD}}                                               \\ \cline{4-9} 
\multirow{-2}{*}{\textbf{Environment}}                                                   & \multirow{-2}{*}{\textbf{Model}} & \multirow{-2}{*}{\textbf{Natural Reward}} & $\epsilon = 0.1$                     & $\epsilon = 0.5$                     & $\epsilon = 0.1$                     & $\epsilon = 0.5$                      & $\epsilon = 0.1$                       & $\epsilon = 0.5$                         \\ \hline
                                                                                         & DQN                              & $156.5 \pm 90.2$                          & $128 \pm 118$                        & $-53 \pm 86$                         & $98.2 \pm 137$                       & $98.2 \pm 137$                        & $-10.7 \pm 136$                        & $-35.9 \pm 118$                          \\
                                                                                         & SA-DQN                           & $20.8 \pm 140$                            & $46 \pm 142$                         & $-100 \pm 0$                         & $-5.8 \pm 131$                       & $-100 \pm 0$                          & $-97.5 \pm 13.6$                       & $-67.8 \pm 78.3$                         \\
                                                                                         & WocaR-DQN                        & $-63 \pm 88$                              & $-100 \pm 0$                         & $-63.2 \pm 88$                       & $-100 \pm 0$                         & $-63.2 \pm 88$                        & $-100 \pm 0$                           & $-63.2 \pm 88$                           \\
                                                                                         & Radial-DQN  & $-100 \pm 0$                              & $-96.1 \pm 12.3$                     & $-96.1 \pm 12.3$                     & $-100 \pm 0$                         & $-100 \pm 0$                          & $-100 \pm 0$                           & $-100 \pm 0$                             \\
\multirow{-5}{*}{\textbf{\begin{tabular}[c]{@{}c@{}}GridWorld\\ Continous\end{tabular}}} & \cellcolor[HTML]{C0C0C0}BP-DQN (Ours)     & \cellcolor[HTML]{C0C0C0}$163 \pm 26$      & \cellcolor[HTML]{C0C0C0}$165 \pm 29$ & \cellcolor[HTML]{C0C0C0}$176 \pm 16$ & \cellcolor[HTML]{C0C0C0}$147 \pm 88$ & \cellcolor[HTML]{C0C0C0}$114 \pm 114$ & \cellcolor[HTML]{C0C0C0}$171.9 \pm 17$ & \cellcolor[HTML]{C0C0C0}$177.2 \pm 10.6$ \\ \hline
\end{tabular}
}
\caption{Continuous Gridworld Results}
\label{Gridworld_results_appendix}
\end{subtable}

\begin{subtable}{\linewidth}
\resizebox{\textwidth}{!}{
\begin{tabular}{|c|c|c|ccc|ccc|ccc|}
\hline
                                   &                                         &                                                                                      & \multicolumn{3}{c|}{\textbf{PGD}}                                                                                        & \multicolumn{3}{c|}{\textbf{MinBest}}                                                                                    & \multicolumn{3}{c|}{\textbf{PA-AD}}                                                                                      \\ \cline{4-12} 
\multirow{-2}{*}{\textbf{Env}}     & \multirow{-2}{*}{\textbf{Model}}        & \multirow{-2}{*}{\textbf{\begin{tabular}[c]{@{}c@{}}Natural \\ Reward\end{tabular}}} & $\epsilon = 1/255$                     & $\epsilon = 3/255$                     & $\epsilon = 15/255$                    & $\epsilon = 1/255$                     & $\epsilon = 3/255$                     & $\epsilon = 15/255$                    & $\epsilon = 1/255$                     & $\epsilon = 3/255$                     & $\epsilon = 15/255$                    \\ \hline
                                   & DQN                                     & $21 \pm 0$                                                                           & $-21 \pm 0$                            & $-21 \pm 0$                            & $-21 \pm 0$                            & $-21 \pm 0$                            & $-21 \pm 0$                            & $-21 \pm 0$                            & $-18.2 \pm 2.3$                        & $-19 \pm 2.2$                          & $-21 \pm 0$                            \\
                                   & SA-DQN                                  & $21 \pm 0$                                                                           & $21 \pm 0$                             & $21 \pm 0$                             & $-20.8 \pm 0.4$                        & $21 \pm 0$                             & $21 \pm 0$                             & $-21 \pm 0$                            & $21 \pm 0$                             & $18.7 \pm 2.6$                         & $-20 \pm 0$                            \\
                                   & WocaR-DQN                               & $21 \pm 0$                                                                           & $21 \pm 0$                             & $21 \pm 0$                             & $-21 \pm 0$                            & $21 \pm 0$                             & $21 \pm 0$                             & $-21 \pm 0$                            & $21 \pm 0$                             & $19.7 \pm 2.4$                         & $-21 \pm 0$                            \\
                                   & Radial-DQN                              & $21 \pm 0$                                                                           & $21 \pm 0$                             & $21 \pm 0$                             & $-21 \pm 0$                            & $21 \pm 0$                             & $21 \pm 0$                             & $-21 \pm 0$                            & $21 \pm 0$                             & $21 \pm 0$                             & $-19 \pm 0$                            \\
                                   & \cellcolor[HTML]{C0C0C0}DP-DQN-O (Ours) & \cellcolor[HTML]{C0C0C0}$19.9 \pm 0.3$                                               & \cellcolor[HTML]{C0C0C0}$19.9 \pm 0.3$ & \cellcolor[HTML]{C0C0C0}$19.8 \pm 0.4$ & \cellcolor[HTML]{C0C0C0}$19.7 \pm 0.5$ & \cellcolor[HTML]{C0C0C0}$19.9 \pm 0.3$ & \cellcolor[HTML]{C0C0C0}$19.9 \pm 0.3$ & \cellcolor[HTML]{C0C0C0}$19.3 \pm 0.8$ & \cellcolor[HTML]{C0C0C0}$19.9 \pm 0.3$ & \cellcolor[HTML]{C0C0C0}$19.9 \pm 0.3$ & \cellcolor[HTML]{C0C0C0}$19.3 \pm 0.8$ \\
\multirow{-6}{*}{\textbf{Pong}}    & \cellcolor[HTML]{C0C0C0}DP-DQN-F (Ours) & \cellcolor[HTML]{C0C0C0}$20.8 \pm 0.4$                                               & \cellcolor[HTML]{C0C0C0}$20.4 \pm 0.9$ & \cellcolor[HTML]{C0C0C0}$20.4 \pm 0.9$ & \cellcolor[HTML]{C0C0C0}$18.3\pm 1.9$  & \cellcolor[HTML]{C0C0C0}$20.6 \pm 0.9$ & \cellcolor[HTML]{C0C0C0}$20.4 \pm 0.8$ & \cellcolor[HTML]{C0C0C0}$21.0 \pm 0.0$ & \cellcolor[HTML]{C0C0C0}$18.6 \pm 2.5$ & \cellcolor[HTML]{C0C0C0}$20.0 \pm 1$   & \cellcolor[HTML]{C0C0C0}$18.2 \pm 1.8$ \\ \hline
                                   & DQN                                     & $34 \pm 0.1$                                                                         & $0 \pm 0$                              & $0 \pm 0$                              & $0 \pm 0$                              & $0 \pm 0$                              & $0 \pm 0$                              & $0 \pm 0$                              & $0 \pm 0$                              & $0 \pm 0$                              & $0 \pm 0$                              \\
                                   & SA-DQN                                  & $30 \pm 0$                                                                           & $30 \pm 0$                             & $30 \pm 0$                             & $0 \pm 0$                              & $27.2 \pm 3.4$                         & $18.3 \pm 3.0$                         & $0 \pm 0$                              & $20.1 \pm 4.0$                         & $9.5 \pm 3.8$                          & $0 \pm 0$                              \\
                                   & WocaR-DQN                               & $31.2 \pm 0.4$                                                                       & $31.2 \pm 0.5$                         & $31.4 \pm 0.3$                         & $21.6 \pm 1$                           & $29.6 \pm 2.5$                         & $19.8 \pm 3.8$                         & $21.6 \pm 1$                           & $24.9 \pm 3.7$                         & $12.3 \pm 3.2$                         & $21.6 \pm 1$                           \\
                                   & Radial-DQN                              & $33.4 \pm 0.5$                                                                       & $33.4 \pm 0.5$                         & $33.4 \pm 0.5$                         & $21.6 \pm 1$                           & $33.4 \pm 0.5$                         & $32.8 \pm 0.8$                         & $21.6 \pm 1$                           & $33.4 \pm 0.5$                         & $33.4 \pm 0.5$                         & $21.6 \pm 1$                           \\
                                   & \cellcolor[HTML]{C0C0C0}DP-DQN-O (Ours) & \cellcolor[HTML]{C0C0C0}$28.8 \pm 1.1$                                               & \cellcolor[HTML]{C0C0C0}$29.1 \pm 1.1$ & \cellcolor[HTML]{C0C0C0}$29 \pm 0.9$   & \cellcolor[HTML]{C0C0C0}$28.9 \pm 0.7$ & \cellcolor[HTML]{C0C0C0}$29.2 \pm 1.0$ & \cellcolor[HTML]{C0C0C0}$28.5 \pm 1.2$ & \cellcolor[HTML]{C0C0C0}$28.6 \pm 1.3$ & \cellcolor[HTML]{C0C0C0}$28.6 \pm 1.2$ & \cellcolor[HTML]{C0C0C0}$28.3 \pm 1$   & \cellcolor[HTML]{C0C0C0}$28.8 \pm 1.3$ \\
\multirow{-6}{*}{\textbf{Freeway}} & \cellcolor[HTML]{C0C0C0}DP-DQN-F (Ours) & \cellcolor[HTML]{C0C0C0}$31.2 \pm 1$                                                 & \cellcolor[HTML]{C0C0C0}$30.0 \pm 0.9$ & \cellcolor[HTML]{C0C0C0}$30.1 \pm 1$   & \cellcolor[HTML]{C0C0C0}$30.7 \pm 1.2$ & \cellcolor[HTML]{C0C0C0}$30.2 \pm 1.3$ & \cellcolor[HTML]{C0C0C0}$30.6 \pm 1.4$ & \cellcolor[HTML]{C0C0C0}$29.4 \pm 1.2$ & \cellcolor[HTML]{C0C0C0}$30.8 \pm 1$   & \cellcolor[HTML]{C0C0C0}$31.4 \pm 0.8$ & \cellcolor[HTML]{C0C0C0}$28.9 \pm 1.1$ \\ \hline
\end{tabular}
}
\caption{Atari Games Results}
\label{Atari_results_appendix}
\end{subtable}
\caption{Experiment Results. We show the average episode rewards $\pm$ standard deviation over 10 episodes for our methods and three baselines. The results for our methods are highlighted in gray.}

\end{table}

\subsubsection{More Ablation Study Results}
\begin{figure}[!t]
\centering
\begin{subfigure}{0.195\textwidth}
    \includegraphics[width=\textwidth]{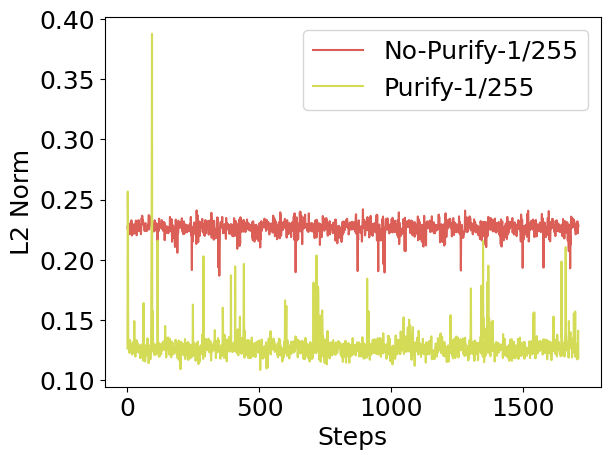}
    \caption{}
    \label{fig:1-255-diff}
\end{subfigure}
\hfill
\begin{subfigure}{0.195\textwidth}
    \includegraphics[width=\textwidth]{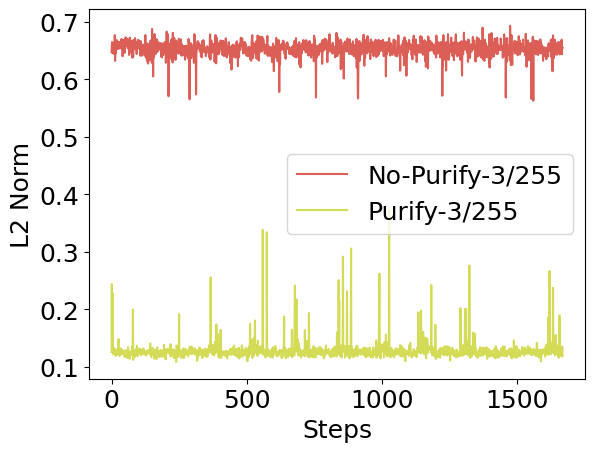}
    \caption{}
    \label{fig:3-255-diff}
\end{subfigure}
\hfill
\begin{subfigure}{0.195\textwidth}
    \includegraphics[width=\textwidth]{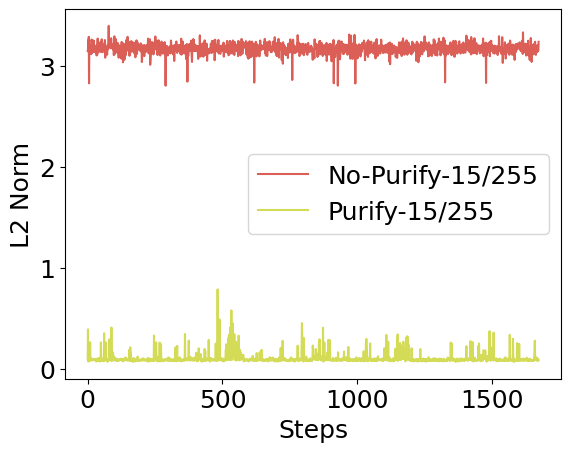}
    \caption{}
    \label{fig:15-255-diff}
\end{subfigure}
\hfill
\begin{subfigure}{0.195\textwidth}
    \includegraphics[width=\textwidth]{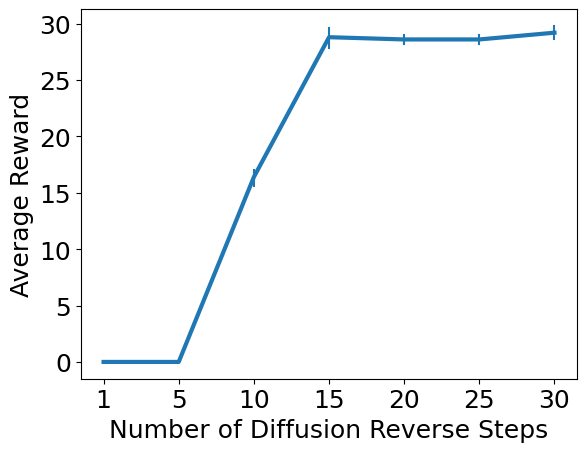}
    \caption{}
    \label{fig:steps-diff}
\end{subfigure}
\hfill
\begin{subfigure}{0.195\textwidth}
    \includegraphics[width=\textwidth]{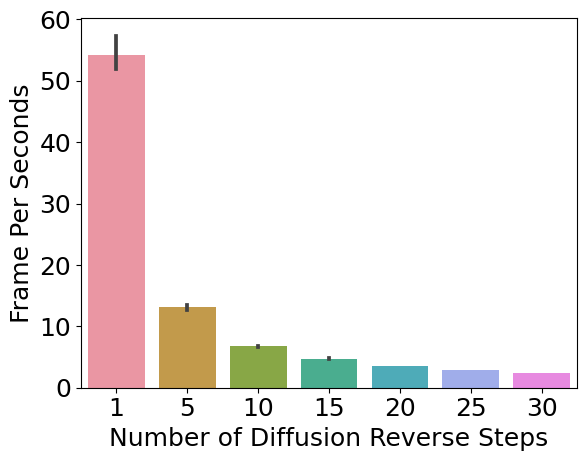}
    \caption{}
    \label{fig:fps}
\end{subfigure}    
\caption{a), b) and c) show the $l_2$ distance between perturbed states and original states before and after purification under different attack budgets in the Pong environment using DDPM. d) shows the performance of DP-DQN-O under different diffusion steps in the Freeway environment under PGD attack with $\epsilon = 15/255$. e) shows the testing stage speed of DP-DQN-O (measured by the number of frames processed per second) under different diffusion steps in the Freeway environment.}
\label{fig:ablation}
\end{figure}

\begin{table}[!t]
\centering
\resizebox{\textwidth}{!}{
\begin{tabular}{ccccl|c|c|c|c|}
\cline{1-4} \cline{6-9}
\multicolumn{1}{|c|}{}                                                                                         & \multicolumn{1}{c|}{}                                     & \multicolumn{1}{c|}{}                                                                                      & \multicolumn{1}{c|}{}                                                                                   &  &                                        &                                        &                                                                                       &                                                                                    \\
\multicolumn{1}{|c|}{\multirow{-2}{*}{\textbf{Environment}}}                                                   & \multicolumn{1}{c|}{\multirow{-2}{*}{\textbf{Model}}}     & \multicolumn{1}{c|}{\multirow{-2}{*}{\textbf{\begin{tabular}[c]{@{}c@{}}Training\\ (hours)\end{tabular}}}} & \multicolumn{1}{c|}{\multirow{-2}{*}{\textbf{\begin{tabular}[c]{@{}c@{}}Testing\\ (FPS)\end{tabular}}}} &  & \multirow{-2}{*}{\textbf{Environment}} & \multirow{-2}{*}{\textbf{Model}}       & \multirow{-2}{*}{\textbf{\begin{tabular}[c]{@{}c@{}}Training\\ (hours)\end{tabular}}} & \multirow{-2}{*}{\textbf{\begin{tabular}[c]{@{}c@{}}Testing\\ (FPS)\end{tabular}}} \\ \cline{1-4} \cline{6-9} 
\multicolumn{1}{|c|}{}                                                                                         & \multicolumn{1}{c|}{SA-DQN}                               & \multicolumn{1}{c|}{3}                                                                                     & \multicolumn{1}{c|}{607}                                                                                &  &                                        & SA-DQN                                 & 38                                                                                    & 502                                                                                \\
\multicolumn{1}{|c|}{}                                                                                         & \multicolumn{1}{c|}{WocaR-DQN}                            & \multicolumn{1}{c|}{3.5}                                                                                   & \multicolumn{1}{c|}{721}                                                                                &  &                                        & WocaR-DQN                              & 50                                                                                    & 635                                                                                \\
\multicolumn{1}{|c|}{\multirow{-3}{*}{\textbf{\begin{tabular}[c]{@{}c@{}}GridWorld\\ Continous\end{tabular}}}} & \multicolumn{1}{c|}{\cellcolor[HTML]{FFFFFF}BP-DQN~(Ours)} & \multicolumn{1}{c|}{\cellcolor[HTML]{FFFFFF}0.6+1.5+7}                                                     & \multicolumn{1}{c|}{192}                                                                                &  &                                        & \cellcolor[HTML]{FFFFFF}DP-DQN-O~(Ours)   & \cellcolor[HTML]{FFFFFF}1.5+18+30                                                       & 6.6                                                                                \\ \cline{1-4}
\multicolumn{1}{l}{}                                                                                           & \multicolumn{1}{l}{}                                      & \multicolumn{1}{l}{}                                                                                       & \multicolumn{1}{l}{}                                                                                    &  & \multirow{-4}{*}{\textbf{Pong}}        & \cellcolor[HTML]{FFFFFF}DP-DQN-F~(Ours) & \multicolumn{1}{l|}{1+18+24}                                                          & 93                                                                                 \\ \cline{6-9} 
\end{tabular}
}
\caption{Training and Testing Time Comparison. The training of our methods contains three parts: a) training the PF-RNN or diffusion model, b) training a vanilla DQN policy without attacks, and c) training a robust policy using BP-DQN, DP-DQN-O, or DP-DQN-F.}
\label{appendix_time_results}
\end{table}

\begin{table}[h]
\begin{subtable}{\linewidth}
\resizebox{\textwidth}{!}{
\begin{tabular}{|c|c|c|cc|cc|cc|}
\hline
                                       &                                  &                                           & \multicolumn{2}{c|}{\textbf{PGD}}                                              & \multicolumn{2}{c|}{\textbf{MinBest}}                                           & \multicolumn{2}{c|}{\textbf{PA-AD}}                                               \\ \cline{4-9} 
\multirow{-2}{*}{\textbf{Environment}} & \multirow{-2}{*}{\textbf{Model}} & \multirow{-2}{*}{\textbf{Natural Reward}} & $\epsilon = 0.1$                        & $\epsilon = 0.5$                     & $\epsilon = 0.1$                       & $\epsilon = 0.5$                       & $\epsilon = 0.1$                       & $\epsilon = 0.5$                         \\ \hline
\textbf{GridWorld Continous (Clean)}   & \cellcolor[HTML]{FFFFFF}BP-DQN   & \cellcolor[HTML]{FFFFFF}$163 \pm 26$      & \cellcolor[HTML]{FFFFFF}$165 \pm 29$    & \cellcolor[HTML]{FFFFFF}$176 \pm 16$ & \cellcolor[HTML]{FFFFFF}$147 \pm 88$   & \cellcolor[HTML]{FFFFFF}$114 \pm 114$  & \cellcolor[HTML]{FFFFFF}$171.9 \pm 17$ & \cellcolor[HTML]{FFFFFF}$177.2 \pm 10.6$ \\ \hline
\textbf{GridWorld Continous}           & \cellcolor[HTML]{FFFFFF}BP-DQN   & \cellcolor[HTML]{FFFFFF}$102 \pm 110$     & \cellcolor[HTML]{FFFFFF}$101.5 \pm 109$ & \cellcolor[HTML]{FFFFFF}$56 \pm 135$ & \cellcolor[HTML]{FFFFFF}$79.3 \pm 125$ & \cellcolor[HTML]{FFFFFF}$30.7 \pm 138$ & \cellcolor[HTML]{FFFFFF}$78.6 \pm 125$ & \cellcolor[HTML]{FFFFFF}$13.1 \pm 140$   \\ \hline
\end{tabular}
}
\caption{Continuous Gridworld Results}
\label{Gridworld_dirty_results_appendix}
\end{subtable}

\begin{subtable}{\linewidth}
\resizebox{\textwidth}{!}{
\begin{tabular}{|c|c|c|ccc|ccc|ccc|}
\hline
                               &                                  &                                                                                      & \multicolumn{3}{c|}{\textbf{PGD}}                                                                                       & \multicolumn{3}{c|}{\textbf{MinBest}}                                                                                  & \multicolumn{3}{c|}{\textbf{PA-AD}}                                                                                      \\ \cline{4-12} 
\multirow{-2}{*}{\textbf{Env}} & \multirow{-2}{*}{\textbf{Model}} & \multirow{-2}{*}{\textbf{\begin{tabular}[c]{@{}c@{}}Natural \\ Reward\end{tabular}}} & $\epsilon = 1/255$                     & $\epsilon = 3/255$                    & $\epsilon = 15/255$                    & $\epsilon = 1/255$                     & $\epsilon = 3/255$                     & $\epsilon = 15/255$                  & $\epsilon = 1/255$                     & $\epsilon = 3/255$                     & $\epsilon = 15/255$                    \\ \hline
\rowcolor[HTML]{FFFFFF} 
\textbf{Freeway (Clean)}       & DP-DQN-F                         & $31.2 \pm 1$                                                                         & $30.0 \pm 0.9$                         & $30.1 \pm 1$                          & $30.7 \pm 1.2$                         & $30.2 \pm 1.3$                         & $30.6 \pm 1.4$                         & $29.4 \pm 1.2$                       & $30.8 \pm 1$                           & $31.4 \pm 0.8$                         & $28.9 \pm 1.1$                         \\ \hline
\textbf{Freeway}               & \cellcolor[HTML]{FFFFFF}DP-DQN-F & \cellcolor[HTML]{FFFFFF}$29.4 \pm 0.9$                                               & \cellcolor[HTML]{FFFFFF}$29.0 \pm 1.2$ & \cellcolor[HTML]{FFFFFF}$28.8\pm 1.5$ & \cellcolor[HTML]{FFFFFF}$29.0 \pm 1.4$ & \cellcolor[HTML]{FFFFFF}$28.8 \pm 0.4$ & \cellcolor[HTML]{FFFFFF}$29.6 \pm 1.1$ & \cellcolor[HTML]{FFFFFF}$29 \pm 1.6$ & \cellcolor[HTML]{FFFFFF}$29.2 \pm 0.4$ & \cellcolor[HTML]{FFFFFF}$28.2 \pm 1.8$ & \cellcolor[HTML]{FFFFFF}$23.6 \pm 1.1$ \\ \hline
\end{tabular}
}
\caption{Atari Games Results}
\label{Atari_dirty_results_appendix}
\end{subtable}
\caption{Noisy Environment Results. We show the average episode rewards $\pm$ standard deviation over 10 episodes for our methods trained in a noisy environment.}
\end{table}

\noindent{\bf{Diffusion Effects.}} In Figures \ref{fig:1-255-diff}-\ref{fig:15-255-diff}, we visualize the effect of DDPM-based diffusion by recording the $l_2$ distance between a true state and the perturbed state and that between a purified true state and the purified perturbed state. For all three levels of attack budgets, our diffusion model successfully shrinks the gap between true states and perturbed states.

\noindent{\bf Performance vs. Running Time in DP-DQN.} We study the impact of different diffusion steps $k$ on average return of DP-DQN-O in Figure~\ref{fig:steps-diff} and their testing stage running time in Figure~\ref{fig:fps}. Figure \ref{fig:steps-diff} shows the performance under different diffusion steps of our method in the Freeway environment under PGD attack with budget $\epsilon = 15/255$. It shows that we need enough diffusion steps to gain good robustness, and more diffusion steps do not harm the return but do incur extra overhead, as shown in Figure \ref{fig:fps}, where we plot the testing stage running time in Frame Per Second (FPS). 
As the number of diffusion steps increases, the running time of our method also increases, as expected. \ignore{Even when $k=1$, our method only achieves around 55 FPS during testing, while SA-DQN and WocaR-DQN can reach around 500 FPS. This points to an interesting research direction of developing strong defenses with low runtime overhead.} \ignore{Potential solutions include using a smaller diffusion model or applying state-of-the-art fast diffusion methods to reduce the overhead.}

On the other hand, as DP-DQN-F uses a distilled sampler, it can decrease the reverse sample step $k$ to as small as 1, which greatly reduces the testing time 
as reported in Table \ref{appendix_time_results}. We report the performance results of DP-DQN-F in Table~\ref{Atari_results} and we find that DP-DQN-F improves DP-DQN-O under small perturbations, but suffers performance loss in Pong under PA-AD attack and large perturbations. Further, DP-DQN-F has a larger standard deviation than DP-DQN in the Pong environment, indicating that DP-DQN-F is less stable than DP-DQN-O in Pong. We conjecture that the lower sample quality introduced by Progressive Distillation causes less stable performance and performance loss under PA-AD attack compared to DP-DQN-O which utilizes DDPM.

\noindent{\bf Training and Testing Overhead.} 
Table \ref{appendix_time_results} compares the training and test-stage overhead of SA-DQN, WocaR-DQN, and our methods. Notice that the training of our methods consists of three parts: a) training the PF-RNN belief model or the diffusion model, b) training a vanilla DQN policy without attacks, and c) training a robust policy using BP-DQN, DP-DQN-O or DP-DQN-F. In the continuous state Gridworld environment, our method takes around 9 hours to finish training, which is higher than SA-DQN and WocaR-DQN. But our method significantly outperforms these two baselines as shown in Table~\ref{Gridworld_results}. In the Atari Pong game, our method takes about 50 hours to train, which is comparable to WocaR-DQN but slower than SA-DQN. In terms of running time at the test stage, we calculate the FPS of each method and report the average FPS over 5 testing episodes. In the continuous Gridworld environment, our method is slower but comparable to SA-DQN and WocaR-DQN due to the belief update and $\mathrm{maximin}$ search. However, in the Atari Pong game, our DP-DQN-O method is much slower than both SA-DQN and WocaR-DQN. This is mainly due to the use of a large diffusion model in our method. However, our DP-DQN-F method is around 13 times faster than DP-DQN-O and is comparable to SA-DQN and WocaR-DQN.
\ignore{Potential solutions include using a smaller diffusion model or applying state-of-the-art fast diffusion methods to reduce the overhead.}


\noindent{\bf Noisy Environment Results.} Existing studies (including all the baselines we used) on robust RL against adversarial state perturbations commonly assume that the agent has access to a clean environment at the training stage. This is mainly because an RL agent may suffer from a significant amount of loss if it has to explore a poisoned environment on the fly, which is infeasible for security-sensitive applications. However, it is an important direction to study the more challenging setting where the agent has access to a noisy environment only, and we have conducted a preliminary investigation. We assumed that the agent could access a small amount of clean data (30 episodes of clean trajectories were used in the paper) to train a belief model or a diffusion model. Note that to learn the diffusion model, we only need a set of clean states but not necessarily complete trajectories. We then trained an RL policy in a poisoned environment by modifying the BP-DQN and DP-DQN algorithms as follows. For BP-DQN, we changed line 11 of Algorithm \ref{valid_state_training} (in Appendix \ref{algorithm-appendix}) to store the average of belief states (obtained from the perturbed states) instead of the true states in the replay buffer. For DP-DQN, we changed lines 8 and 9 of Algorithm \ref{invalid_state_training} (in Appendix \ref{algorithm-appendix}) to apply the diffusion-based purification on perturbed states instead of true states. We trained these policies against PGD-poisoned continuous Gridworld and Freeway environments, with an attack budget of 0.1 and 1/255, respectively, and then tested them in the same environment. The results are as shown in Table \ref{Gridworld_dirty_results_appendix} and \ref{Atari_dirty_results_appendix}. Although our algorithms suffer from a small performance loss when the training environment is noisy, they can still achieve a high level of robustness under all test cases. 

\end{document}